\documentclass{article}

\usepackage[preprint]{neurips_2026}

\usepackage[utf8]{inputenc} 
\usepackage[T1]{fontenc}    
\usepackage{hyperref}       
\usepackage{url}            
\usepackage{booktabs}       
\usepackage{amsfonts}       
\usepackage{nicefrac}       
\usepackage{microtype}      
\usepackage{xcolor}         

\usepackage{amsmath}
\usepackage[capitalize,noabbrev]{cleveref}
\usepackage[textsize=tiny]{todonotes}
\usepackage{graphicx}
\usepackage{multirow}
\usepackage{colortbl}
\usepackage{amssymb}
\usepackage{enumitem}
\usepackage{caption}

\usepackage{amsmath, amssymb, amsthm}
\usepackage{mathtools}
\usepackage{inconsolata}
\usepackage{fontawesome5}
\usepackage{colortbl}
\usepackage{pifont}
\usepackage{latexsym}
\usepackage{subfigure}

\usepackage[dvipsnames,table]{xcolor}
\usepackage{graphicx}
\usepackage{pgf}
\usepackage{tcolorbox}

\definecolor{MutedBlue}{RGB}{250, 90, 80}    
\definecolor{MutedSage}{RGB}{158, 172, 148}   
\definecolor{MutedRose}{RGB}{188, 143, 143}   
\definecolor{MutedSand}{RGB}{210, 180, 140}   
\definecolor{MutedSlate}{RGB}{112, 128, 144}  

\definecolor{TestColor}{HTML}{659936}
\definecolor{AgentColor}{HTML}{4249bc}
\definecolor{SystemColor}{HTML}{bc5f42}
\usepackage{booktabs}
\usepackage{multirow}
\usepackage{multicol}
\usepackage{makecell}
\usepackage{array}
\usepackage{longtable}
\usepackage{arydshln} 
\usepackage{titlesec}
\usepackage{subcaption}
\usepackage{enumitem}
\usepackage{microtype}
\usepackage{soul}
\usepackage{xspace}
\usepackage{etoolbox}
\usepackage{wrapfig}

\definecolor{myblue}{RGB}{244, 250, 246}
\definecolor{grayicon}{gray}{0.6}
\definecolor{greenicon}{RGB}{60, 100, 60}
\usepackage{algorithmic}

\definecolor{bluecite}{HTML}{0071BC}
\hypersetup{
    colorlinks,
    citecolor=bluecite,
    linkcolor=red
}
\usepackage[utf8]{inputenc}

\usepackage{cleveref}
\crefname{section}{§}{§§}
\Crefname{section}{§}{§§}

\theoremstyle{plain}
\newtheorem{theorem}{Theorem}[section]
\newtheorem{proposition}[theorem]{Proposition}
\newtheorem{lemma}[theorem]{Lemma}
\newtheorem{corollary}[theorem]{Corollary}
\theoremstyle{definition}

\newtheorem{assumption}[theorem]{Assumption}
\theoremstyle{remark}

\title{Clearer Sight, Fewer Lies: Oriented Pickup Preference Optimization for Multimodal Hallucination Mitigation}

\author{%
Xin Zou$^{1,2}$, Haolin Deng$^{1,2}$, Yibo Yan$^{1,2}$, Shuliang Liu$^{1,2}$, Zhiwei Jin$^{3}$,\\ \textbf{Chen Chen}$^{3}$, \textbf{Haonan Lu}$^{3}$, \textbf{Xuming Hu}$^{1,2}$\thanks{Corresponding author.  <dylan.zoux@gmail.com>} \\ [2.5pt]
  $^{1}$HKUST (GZ)\quad
  $^{2}$HKUST\quad
  $^{3}$OPPO AI Center}
\date{}

\begin{document}

\maketitle

\begin{abstract}
Multimodal Large Language Models (MLLMs) are prone to hallucination as their generation preferences are insufficiently calibrated to visual evidence, causing them to fall back on linguistic priors, rather than faithful grounding. In this work, we start from an empirical observation: when query-relevant visual evidence is explicitly strengthened using the model’s own attention, generation becomes more accurate, suggesting that many failures do not arise solely from missing perception, but from an insufficient tendency to trust the evidence the model has already attended to. Motivated by this finding, we propose Oriented Pickup Preference Optimization (\texttt{OPPO}), an evidence-aware alignment objective that learns preferences over the strength of visual evidence, rather than only response quality. Concretely, \texttt{OPPO} contrasts the same faithful response under stronger, anchored, weaker-evidence views, turning naive visual preference into ordered visual-evidence alignment. We further combine this objective with fine-grained span-level and token-level regularization to stabilize the training. Besides, we provide a theoretical analysis showing that ordered evidence margins induce a positive lower bound on local visual sensitivity. Extensive evaluations across hallucination and general-purpose benchmarks demonstrate that \texttt{OPPO} consistently outperforms baseline methods. 


\end{abstract}




\section{Introduction}
The rapid evolution of MLLMs \citep{Qwen2.5-VL,Qwen3-VL,jin2025andesvl,wang2025internvl3_5,team2026kimi} has fundamentally advanced document intelligence \citep{hu2025mplug,liao2025doclayllm} and complex reasoning tasks \citep{hu2024bliva,vu2025describe}. Yet, their failures still share a common pattern: when visual evidence is weak, atypical, or conflicts with semantic expectations, the model often trusts its internal linguistic prior more than visual evidence, leading to what is known as multimodal hallucination  \cite{huang2025survey}. Fig. \ref{fig:hallcase} illustrates this shared pattern across \textit{generalized} and \textit{text-scene} hallucination. This deficiency poses significant risks in safety-critical \cite{kim2025vru} and text-based VQA tasks \cite{shu2025semantics,huang2025ocr}, where any wrong in object or character recognition can completely undermine the model's reliability \cite{chen2025can}.\vspace{0.2em}

\begin{wrapfigure}{r}{0.5\textwidth}
    \centering\vspace{-1.3em}
    \includegraphics[width=0.99\linewidth]{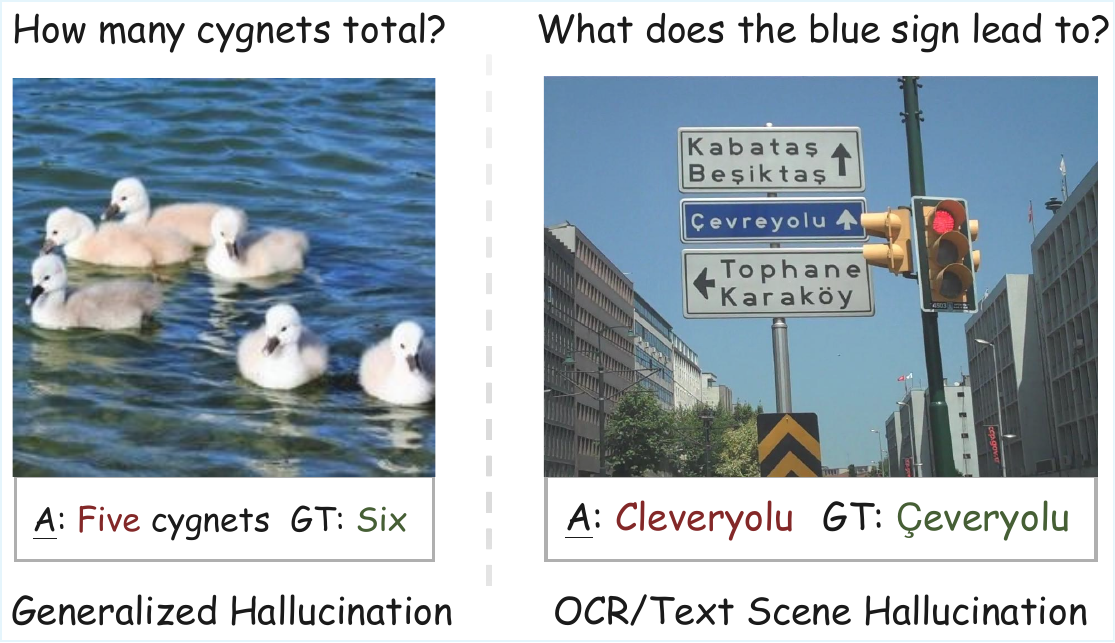}\vspace{-0.5em}
    \caption{Examples of hallucination in MLLMs. }
    \label{fig:hallcase}\vspace{-3em}
\end{wrapfigure}
The root of this misalignment may stem from a fundamental limitation in current post-training paradigms \cite{han2025learning,shu2025large,han2025self}. Previous preference alignment methods typically treat the visual input as a static and passive condition while focusing on optimizing response likelihood \cite{huang2025visual,tang2026revisiting}, thus they may improve response ranking without explicitly enforcing evidence sensitivity. Consequently, when confronted with low-quality or counter-intuitive visual signals, the decoder tends to “short-circuit” the evidence and retreat to its parameterized knowledge \cite{he2025mmboundary}, \textit{i.e.}, the model prefers to predict \textit{“what should be there”} based presumptively on linguistic probability, rather than to decode \textit{“what is actually there”} directly according to task-relevant visual groundedness.

This disconnect raises a pivotal question: \textit{How can we explicitly align the model’s generation preferences with the strength of its visual evidence, rather than merely ranking the final text outputs?} 

Interestingly, our preliminary analysis (\cref{sec:motivation}) reveals a promising paradox, \textit{i.e.}, {even when MLLMs generate hallucinatory content, their cross-modal attention maps often successfully localize the relevant visual regions}. This observation suggests that the failure is not necessarily a lack of “sight”, but rather a policy-level deficiency: the model does not know how to prioritize the evidence it has already attended to. To bridge this gap, we propose oriented pickup preference optimization (\texttt{OPPO}), an evidence-aware alignment paradigm that transforms preference learning from pure \textit{answer ranking} into \textit{evidence-aware ranking}. Specifically, we identify query-relevant regions through attention priors, synthesize stronger and weaker visual views, and train the model to prefer faithful responses more strongly when the supporting evidence is stronger. Compared with conventional DPO-style objectives, \texttt{OPPO} adds an orthogonal supervision signal, \textit{i.e.},  it teaches the model that the confidence assigned to a faithful answer should scale with the strength of the evidence that supports it. To make this supervision more stable and localized, we further introduce span-level and token-level regularization, so that the alignment signal is concentrated on critical content and remains consistent throughout autoregressive generation. Ultimately, \texttt{OPPO} ensures that the model's responses are not just linguistically plausible, but are causally anchored in the visual evidence it perceives.

\begin{figure}[t]
    \centering\vspace{-0.5em}
    \includegraphics[width=1\linewidth, trim={20pt 15pt 44pt 10pt}, clip]{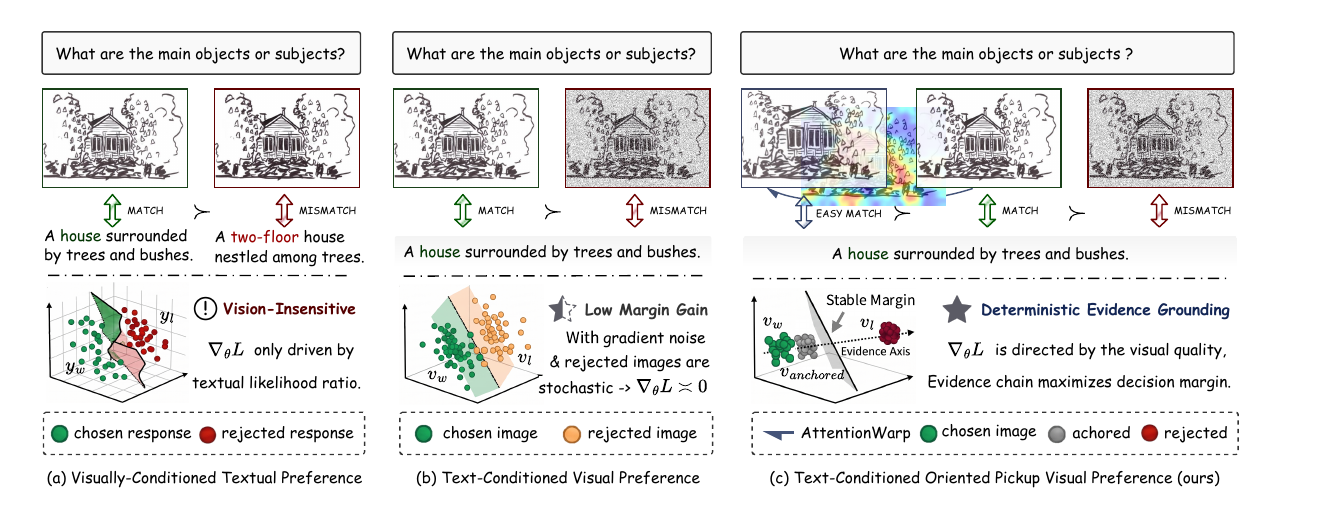}\vspace{-0.25em}
    \caption{Comparison of preference optimization paradigms. (a) DPO focuses on linguistic alignment but is vision-insensitive. (b) Multimodal DPO utilizes stochastic visual augmentation but suffers from high gradient noise and low margin gain. (c) Our method constructs a deterministic visual preference chain to align $\nabla_{\theta}L$ with a query-aware evidence axis, thus maximizing the stable decision margin.}
    \label{fig1:main}\vspace{-1.75em}
\end{figure}
To further clarify the mechanics of this causal anchoring, we contrast our approach with existing preference optimization paradigms in Figure \ref{fig1:main}. Direct Preference Optimization (DPO) operates on the language manifold, but it often yields a decision boundary that is insensitive to fine-grained visual hallucinations. Existing multimodal variants of DPO attempt to bridge this gap by introducing visual perturbations; however, their reliance on random noise or generic cropping often misses the core visual evidence. This leads to a “noisy” latent space where the distinction between positive and negative samples remains ill-defined. In contrast, \texttt{OPPO} explicitly constructs a visual preference chain along a query-aware “evidence axis”. By leveraging attention priors to warp and amplify task-relevant regions, we pull the faithful samples toward high-confidence regions of the latent space while pushing hallucinatory counterparts across the decision hyperplane. This mechanism forces the model to ground its preference scores on explicit visual features, thereby enlarging the optimization margin and effectively mitigating the gradient noise inherent in naive visual preference learning.

Overall, our contributions are summarized as follows: \vspace{-0.25em}
\begin{itemize}[leftmargin=*]
\item We propose to redefine MLLM hallucination as a policy-level distrust of internal visual signals rather than a mere deficiency in visual perception, an insight that allows us to pivot the alignment objective from stochastic linguistic preference to structurally-ordered evidence-aware calibration. 
\item We propose \texttt{OPPO}, which anchors the decision boundary to an ordered visual preference chain, reinforced by span-level regularization and a theoretical lower bound on local visual sensitivity.
\item We demonstrate that \texttt{OPPO} consistently outperforms baseline methods, effectively mitigating hallucinations while preserving general capabilities across both generalized and text-scene benchmarks.
\end{itemize}

\vspace{-0.3em}\section{Preliminary and Motivation} \vspace{-0.2em}
\textbf{Direct Preference Optimization for MLLMs}.
To align MLLMs with human intent, RLHF \cite{bai2022training} and RLAIF \cite{leerlaif} frameworks optimize a policy $\pi_\theta$ to maximize the KL-constrained expected reward:
\begin{equation}\label{eq1}
    \max_{\pi_\theta} \mathbb{E}_{(x,v)\sim\mathcal{D},y\sim\pi_\theta(y|x,v)}[r({x},{y},{v})] - \beta\mathbb{D}_{\mathrm{KL}}[\pi_\theta( \cdot\, |{x},{v}) || \pi_{\mathrm{ref}}( \cdot\, |{x},{v})],
\end{equation}
where $\mathcal{D}$ denotes a preference dataset, ${v}$ is an image, and $\beta$ is the regularization parameter. DPO \cite{rafailov2023direct} bypasses explicit reward modeling by deriving a closed-form solution, the reward is reparameterized:
\begin{equation}\label{eq2}
    r({x}, {y}, {v})=\beta\log\frac{\pi_\theta({y}| {x}, {v})}{\pi_\mathrm{ref}({y}|{x},{v})} + \beta \log Z({x},{v}).
\end{equation}
Here, $Z({x}, {v})$ is a partition function dependent only on the inputs $({x}, {v})$. By integrating Bradley-Terry model \cite{bradley1952rank} and preference pairs $({y}_w, {y}_l)$, the partition function $Z$ cancels out, yielding DPO objective:
\begin{equation}\label{eq3}
    \mathcal{L}_\mathrm{DPO} = -\mathbb{E}_{\mathcal{D}} \left[ \log \sigma \left( \beta \log \frac{\pi_\theta(y_w | x, v)}{\pi_{\text{ref}}(y_w | x, v)} - \beta \log \frac{\pi_\theta(y_l | x, v)}{\pi_{\text{ref}}(y_l | x, v)} \right) \right],
\end{equation}
where $\sigma(\cdot)$ is \texttt{sigmoid} function, which enables direct policy optimization without explicit rewarding.

\textbf{Multimodal Direct Preference Optimization}.
To align MLLMs with visual preferences, multimodal DPO extends the original framework by contrasting visual contexts $(v_w, v_l)$ instead of textual responses. Given the instruction $x$ and response $y$, the objective optimizes the policy $\pi_\theta$ to favor the preferred image $v_w$ over the dispreferred or hallucinatory image $v_l$. The loss function is defined as:
\begin{equation}\mathcal{L}_{\mathrm{MDPO}} = -\mathbb{E}_{\mathcal{D}} \left[ \log \sigma \left( \beta \log \frac{\pi_\theta(y | x, v_w)}{\pi_{\text{ref}}(y | x, v_w)} - \beta \log \frac{\pi_\theta(y | x, v_l)}{\pi_{\text{ref}}(y | x, v_l)} \right) \right].
\end{equation}
By maximizing the likelihood ratio of $v_w$ relative to $v_l$, which compels the model to ground its generations more accurately in the visual input, effectively suppressing the visual hallucination issue.

\vspace{-0.5em}
\subsection{Causes of Multimodal Hallucination and Motivation}\label{sec:motivation}\vspace{-0.4em}
Multimodal hallucinations are often attributed to a variety of factors, including pre-training biases \cite{zhou2023analyzing}, visual uncertainty \cite{leng2023mitigating}, attention sinks \cite{zhang2024redundancy}, and the ``recency bias'' in long autoregressive generation, where the model increasingly relies on previously generated text tokens over the initial visual context \cite{daunhawer2021limitations}. However, we argue that these factors culminate in a common failure mode: a misalignment between the model's internal perception (what it “sees”) and its final decoding policy.

This misalignment is particularly acute in safety-sensitive or text-rich scenarios. Unlike open-ended storytelling, these tasks require visual faithfulness over semantic plausibility. For instance, when identifying a serial code or an object, the model must suppress its linguistic prior in favor of rarer but visually-grounded tokens. We categorize this failure into two types: (a) \textit{Prior-driven overconfidence}, where the language prior dominates the output; and (b) \textit{Evidence under-utilization}, where the model fails to translate valid visual cues into a sufficiently strong generation preference. While existing works often focus on the former by suppressing priors, our work targets the latter, \textit{i.e.}, we want to empower the model to “\textbf{\textit{pick up}}” and trust the visual evidence it has already captured in the encoding.

\begin{figure}[h]
    \centering \vspace{-1.em}
\includegraphics[width=1\linewidth]{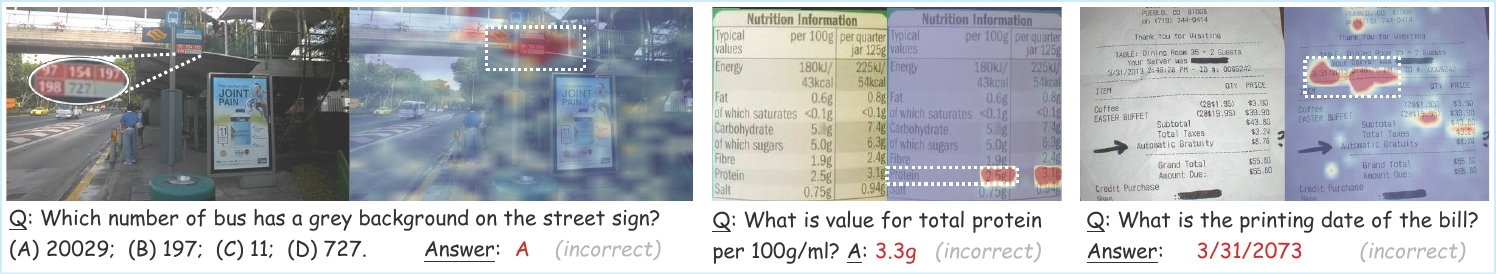}\vspace{-0.6em}
    \caption{Cases of the visual grounding. The attention heatmaps show MLLMs maintain consistent textual-visual attention on query-relevant clues, regardless of the correctness of the final prediction.}
\label{fig:motivation}\vspace{-1em}
\end{figure}

To further validate that the bottleneck lies in \textit{utilization} rather than \textit{perception}, we conducted an empirical analysis on hallucinatory samples. As illustrated in Fig. \ref{fig:motivation}, a striking ``sight-text'' paradox emerges, \textit{i.e.}, MLLMs often still know where to look \cite{zhang2025mllms,jung2025visual}, even when they output ''the wrong answer. For example, when misidentifying a bus number 727' as a more common 720', the model's cross-modal attention heatmap often remains precisely concentrated on the correct numerical region.

\begin{wraptable}{r}{0.48\textwidth}
\centering \footnotesize
  \renewcommand\tabcolsep{1pt}
  \vspace{-0.1em}
  \caption{Results of visual-enhance strategies.}
\label{tab:vicropattwarp}
  \vspace{-0.6em}
    \resizebox{0.486\textwidth}{!}
    {
   \begin{tabular}{l*{1}{>{\centering\arraybackslash}p{4em}}*{1}{>{\centering\arraybackslash}p{3.1em}}*{1}{>{\centering\arraybackslash}p{3.5em}}*{2}{>{\centering\arraybackslash}p{3.3em}}}
    \toprule
    Method\quad & {TextVQA}  & GQA & {DocVQA} & POPE &MMMU \\
    \midrule
    LLaVA-1.5 &{49.3} &60.5  &{18.1}  &85.3 &36.9 \\
    + API \cite{dalal2025constructive}  &{49.9}& 60.6 & {17.4} &85.9 &36.9 \\ 
    + AttCrop\; &{56.3}& 60.9& {22.5}  &87.0 &37.2 \\
    \rowcolor{gray!10} + AttWarp  & 58.1 & 63.7 & 25.5 & 87.5 & 40.4 \\
     \textit{vs.} baseline & {\scriptsize ($\Delta$8.8)} & {\scriptsize ($\Delta$3.2)} & {\scriptsize ($\Delta$7.4)} & {\scriptsize ($\Delta$2.2)} & {\scriptsize ($\Delta$3.5)} \\ \midrule
    Qwen-VL &{81.0} &62.4 &{77.3} & 86.1  &47.3 \\
    + API \cite{dalal2025constructive}  &{81.6} &61.1 &{68.4} &85.8  &47.4  \\  
    + AttCrop\; &{83.8} &60.6 &{82.5} & 86.7   &47.1\\
    \rowcolor{gray!10} + AttWarp  & 84.7 & 64.0 & 84.1 & 87.4 & 50.4 \\
     \textit{vs.} baseline  & {\scriptsize ($\Delta$3.7)} & {\scriptsize ($\Delta$1.6)} & {\scriptsize ($\Delta$6.8)} & {\scriptsize ($\Delta$1.3)} & {\scriptsize ($\Delta$3.1)} \\
    \bottomrule
    \end{tabular}
    }
  \vspace{-1.25em}
\end{wraptable}
Motivated by this observation, we argue that the MLLMs' hallucination is not necessarily ``blindness'' but a policy-level inability to prioritize attended evidence. Then, we conduct a pilot intervention study to verify whether manually amplifying these ``attended but ignored'' visual signals can rectify the model's decision-making. We employ two \textit{visual-enhancement} strategies, cropping and warping, which zoom in on or spatially distort the image based on the model's own attention heatmaps to provide a "magnified" view of the evidence.
As shown in Table \ref{tab:vicropattwarp}, these interventions consistently improve performance, with particularly pronounced gains on hallucination, text-dense, general benchmarks. When visual evidence is stronger, the policy generation naturally shifts towards visual factuality, as \cite{xu2025mitigating,cho2025you,li2025mitigating} demonstrate. 

However, relying on manual image manipulation during inference is computationally expensive and inflexible, which leads to our central hypothesis: \textit{if ``stronger evidence leads to better factuality'', then the alignment objective itself should explicitly encode preferences over evidence strength, rather than merely comparing final text outputs}. By training the model to prefer responses supported by high-confidence visual cues inherently, we can bridge the gap between perception and generation without external intervention and extra overhead, which provides the direct impetus for our method.

\vspace{-0.5em}\section{Oriented Pickup Preference Optimization}\vspace{-0.3em}
OPPO transforms standard response-level alignment into evidence-aware preference optimization. While conventional DPO-style methods focus on which response is superior under a fixed image, we investigate how a model's confidence in a faithful response should scale with evidence strength. By explicitly modeling the relationship between visual clarity and policy preference, we provide a unified framework (Fig. \ref{fig2:oppo}) to mitigate hallucination where linguistic priors override visual grounding.

\begin{figure}[h]
    \centering\vspace{-0.25em}
    \includegraphics[width=1\linewidth, trim={45pt 15pt 120pt 16pt}, clip]{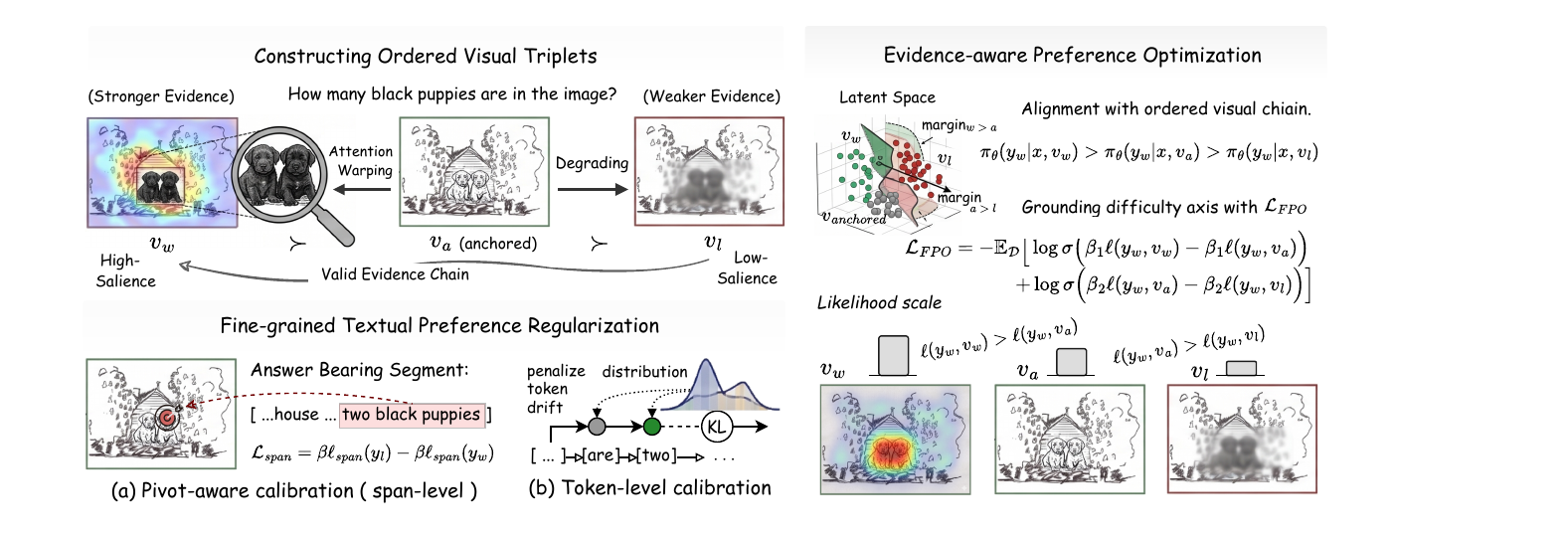}\vspace{-0.2em}
    \caption{Overview of the OPPO framework. We establish an ordered visual triplet $v_w \succ v_a \succ v_l$ to enable evidence-aware preference alignment. By optimizing the FPO objective, the model learns to scale its confidence based on the salience of visual evidence. Further, the framework incorporates span-level and token-level calibration to penalize textual distribution drift via deterministic grounding.}
    \label{fig2:oppo}\vspace{-1.em}
\end{figure}
\subsection{Constructing Ordered Visual Triplets}
To treat evidence strength as a controllable supervisory signal, we construct an ordered triplet of views for each query-answer pair: \textit{stronger-evidence view} $v_w$, \textit{anchored view} $v_a$, and \textit{weaker-evidence view} $v_l$. These views maintain the same semantic target while systematically varying the difficulty of visual grounding. Specifically, we derive $v_w$ by amplifying query-relevant regions identified by the model's internal attention. Given the anchored image $v_a$ and query $x$, we apply a differentiable spatial transformation $\mathcal{T}$ \cite{dalal2025constructive} that magnifies high-salience areas while preserving global context:
\begin{equation}v_w = \mathcal{T}(v_a, \texttt{cross-attention}\,(x, v_a)).
\end{equation}
By presenting a ``clarified'' version of existing evidence without external annotations, $v_w$ forces the model to acknowledge strengthened visual support. $v_l$ is constructed by stochastic degradation in the anchored view $v_a$. In this way, we establish a formal evidentiary hierarchy: $v_w \succ v_a \succ v_l$, which serves as the core supervisory structure, teaching the model to positively use evidence for faithfulness.

\subsection{Evidence-aware Preference Optimization}
With the ordered visual triplet in place, OPPO extends standard DPO in two stages. First, it introduces an evidence-aware preference term that compares the \emph{same faithful response} across different evidence conditions. Second, it adds fine-grained textual regularizers so that this evidence preference is expressed at the answer-bearing span level and throughout the token-generation trajectory.

Naive DPO optimizes whether a chosen response should outrank a rejected one under a fixed image. OPPO adds a complementary question: \emph{should the same faithful response become more preferred when the supporting visual evidence is stronger?}
We answer this with foci-guided preference optimization, which anchors the chosen response $y_w$ to the ordered visual chain introduced above:
\begin{equation}
    \mathcal{L}_{FPO} =-\mathbb{E}_{\mathcal{D}} \big[ \log \sigma \big( \beta_1 \ell(y_w, v_w) - \beta_1 \ell(y_w, v_{a}) \big) + \log \sigma \big( \beta_2 \ell(y_w, v_{a}) - \beta_2 \ell(y_w, v_l) \big) \big],
\end{equation}
where $\ell(y, v) = \log \frac{\pi_\theta(y | x, v)}{\pi_{ref}(y | x, v)}$.
Rather than only learning that $y_w$ should beat $y_l$, it learns that the \emph{same} faithful answer should receive a larger preference margin when the visual evidence is stronger. The first term enforces $v_w \succ v_a$, encouraging the model to rely more heavily on amplified question-relevant evidence. The second term enforces $v_a \succ v_l$, discouraging the model from assigning higher confidence to answers that can be supported mainly by language priors. Together, the cooperation of the two terms makes visual evidence sensitivity part of the preference alignment objective.

\textbf{Fine-grained textual regularization.}
Evidence-level preference alone is still too coarse to shape the generation reliably. In practice, hallucinations are often localized: a single entity, number, or answer span is wrong, while the rest of the response is partially correct. Moreover, even when the final answer looks plausible, token-level drift during autoregressive decoding can reveal that the model is gradually moving away from grounded evidence, and easily triggering reward hacking \cite{eisenstein2023helping,miao2024inform}. To address these, we introduce decomposed optimization to facilitate fine-grained reward attribution:

\textbf{\textit{(a) Pivot-aware calibration.}} For critical span $y_s$, we think it should be assigned more reward. Following \cite{yu2024rlhf,fuchip}, we perform a sequence-level comparison between the chosen $y_w$ and rejected $y_l$, identify the critical span $y_s$, i.e.,  the answer-bearing segment where the chosen and rejected trajectories diverge most strongly. Then, boost its contribution via a weighted log-probability term:
\begin{equation}
\mathcal{L}_{span} = -\mathbb{E} \big[ \log \sigma \big( \beta \ell_{span}(y_w, v_{a}) -  \beta\ell_{span}(y_l, v_{a}) \big) \big] ,
\end{equation}
where $\ell_{span}(y, v_{a}) = \frac{1}{s} \big( \ell(y, v_{a}) + \sum\nolimits_{y_i \in y} \ell(y_{<i}, v_{a})\big)$, and $s= |y| + |y_s|$ serves as the length-normalization factor. By explicitly calibrating the pivot span, the term improves credit assignment.

\textbf{\textit{(b) Token-level calibration.}} As the autoregressive process is token by token, we think alignment on the token level is natural and makes MLLM keep diversity \cite{zeng2024token}. To further construct a fine-grained constraint, different from only response-level preference optimization (Eq. \ref{eq3}) in the previous methods, we enforce token-level supervision for consistency across the full generation trajectory:
\begin{equation} 
\mathcal{L}_{token} = \mathbb{E}_{(y_w,y_l,x,v_{a})\sim\mathcal{D}} \big[ \beta [\tau(y_w, v_{a})]_{\bot}  - \beta \tau(y_l, v_{a}) \big] ,
\end{equation}
where $[\cdot]_{\bot}$ denotes stop-gradient operation, $\tau(y, v_{a}=\sum_t \mathbb{D}_{\mathrm{KL}}[\pi_\theta(\cdot|x,v_a,y_{<t})\|\pi_{ref}(\cdot|x,v_a,y_{<t})]$ aggregates the sequential KL divergence along the generation path. Specifically, this token-level calibration penalizes accumulative per-token drift and keeps the aligned policy close to a stable reference trajectory during alignment training, which is especially important in complex objects or text-rich scenarios, where a locally wrong token can invalidate the entire generation.

The final \texttt{OPPO} objective establishes a comprehensive optimization framework by integrating standard response preference with evidence-oriented preference and a system of fine-grained textual regularization that constrains the model's generation space:
\begin{equation}
\mathcal{L}_{OPPO}= \mathcal{L}_{DPO} +  \mathcal{L}_{DePO} + \gamma_1  \mathcal{L}_{FPO} ,
\end{equation}
where $\mathcal{L}_{DePO}=\mathcal{L}_{span} + \gamma_2*\mathcal{L}_{token}$, and $\gamma_1,\gamma_2$ balance the influence of evidence-aware preference learning and fine-grained textual regularization. Conceptually, standard $\mathcal{L}_{DPO}$ teaches the model \emph{which answer} to consistently prefer, $\mathcal{L}_{FPO}$ teaches it \emph{how that preference should scale with visual evidence strength}, and $\mathcal{L}_{DePO}$ teaches it \emph{where and how} to express that preference during generation.

\section{Experiments}\label{sec:exp}
    \vspace{-0.5em}
\subsection{Experimental Settings}
\noindent
\textbf{Training Data.}
We choose to mix the part of RLAIF-V~\cite{yu2024rlaif} and TextVQA-train \cite{singh2019towards} datasets with total 8000 training samples as our training data, where for RLAIF-V dataset, as it has preference pairs we directly randomly sample a subset, and for the TextVQA-train data, we same first random sample a subset, then use the RLAIF-V method constructs chosen and rejected preference pairs. Accordingly, we prepare the correct responses with both descriptive ($e.g.$, \texttt{Describe
the image in detail}) and non-descriptive ($e.g.$, \texttt{<Question>, Please answer in
one word}) instructions.

\begin{table*}[t]\vspace{-0.5em}
    \caption{Quantitative evaluation of hallucination performance. Hal. means hallucination rate, lower is better for \textcolor{greenicon}{$\downarrow$} marked metrics. \textbf{Bold} and \underline{underlined} denote the best and the second value, respectively.}
    \label{tb:sota}
    \vspace{-0.4em}
    \resizebox{\textwidth}{!}
    {
        \begin{tabular}{*{1}{>{\arraybackslash}p{6.em}}*{8}{>{\centering\arraybackslash}p{2.35em}}*{2}{>{\centering\arraybackslash}p{2.6em}}*{2}{>{\centering\arraybackslash}p{2.4em}}*{1}{>{\centering\arraybackslash}p{2.1em}}}
        \toprule
        \multirow{2}{*}{Methods} & 
        \multicolumn{2}{c}{{POPE}} &
        \multicolumn{2}{c}{{MMHal}} & 
        \multicolumn{4}{c}{{AMBER (Generative)}} & 
        \multicolumn{2}{c}{{OBJHal}} &
        \multicolumn{2}{c}{{HallusionBench}} &
        \multirow{2}{*}{{\begin{tabular}{l}\hspace{-0.17cm}Avg. \\\hspace{-0.17cm}Hal. $\downarrow$\end{tabular}}} \\
        \cmidrule[0.5pt](lr){2-3} \cmidrule[0.5pt](lr){4-5} \cmidrule[0.5pt](lr){6-9} \cmidrule[0.5pt](lr){10-11}\cmidrule[0.5pt](lr){12-13}
        & {Acc\textcolor{greenicon}{$\uparrow$}} & {Pre\textcolor{greenicon}{$\uparrow$}} & {Score\textcolor{greenicon}{$\uparrow$}} & {Hal.\textcolor{greenicon}{$\downarrow$}} & {CHAIR\textcolor{greenicon}{$\downarrow$}} & {Cover\textcolor{greenicon}{$\uparrow$}} & {Hal.\textcolor{greenicon}{$\downarrow$}} & {Cog\textcolor{greenicon}{$\downarrow$}} &{CHAIR$_\text{s}$\textcolor{greenicon}{$\downarrow$}} &  {CHAIR$_\text{i}$\textcolor{greenicon}{$\downarrow$}} & $\;$qAcc\textcolor{greenicon}{$\uparrow$}\hspace{-0.1cm} & fAcc\textcolor{greenicon}{$\uparrow$}& \\ 
        \midrule
        {MemVR}\texttt{\scriptsize{(ICML25)}}
        &85.1&89.1&2.33&54.3&8.6&50.3&39.7&4.6&48.6&16.8 & 5.2 & 11.8 & 47.0\\     
        {HALVA}\;\texttt{\scriptsize{(ICLR25)}}
        &87.2&78.9&2.12&59.1&6.9&52.8&33.2&3.5&47.3&14.6 & 21.4 & 25.1 & 46.2\\     
        {RLAIFV}\texttt{\scriptsize{(CVPR25)}} 
        &88.1&88.0&{2.89}&42.8&3.0&50.3&16.5&1.0&13.7&4.2 & 26.9 & 30.1 & 29.7\\
        \midrule
        {LLaVA-1.5}
        &84.9&89.1&2.18&59.2&8.8&50.1&40.4&4.7&54.7 & 26.5 &3.9 &11.5 &49.8\\
        + {DPO}
        &87.6&79.8&2.14&65.8&6.5&55.5&34.5&2.3&58.1&7.2 & 7.1 & 7.6 &50.1\\
        + {mDPO}
        &87.8&82.0&\underline{2.39}&65.2&4.4&52.4&24.5&2.4&35.7 & 9.8 &6.8 & 9.5 &44.9\\
        + {CHiP}\texttt{\scriptsize{(ICLR25)}}
        &\textbf{88.1}&\textbf{92.2}&2.32&\underline{56.2}&\underline{2.9}&\underline{57.3}&\underline{19.9}&\underline{1.9}&\textbf{25.3}&\textbf{6.2} & \textbf{7.3} & \underline{9.8} &\underline{38.1} \\
         \rowcolor{gray!10}  + {OPPO}(ours)
        &87.9&\underline{91.9}&\textbf{2.54}&\textbf{52.0}&\textbf{2.3}&\textbf{59.3}&\textbf{14.9}&\textbf{1.8}&\underline{28.9}&\underline{6.5} & \underline{7.2} & \textbf{10.1} &\textbf{33.5} \\ \midrule
        Qwen2.5-VL
        &86.3&85.8&3.29&27.5&4.6&54.6&21.1&1.3&40.7&8.6 & 39.6 & 34.9 &24.3\\
        + DPO  & \underline{86.8}&86.2    & 3.31&27.0 & 4.4&\underline{54.9}&20.5&1.3 & 38.5&8.2  & \underline{39.8}&35.1  &23.7   \\
        + mDPO & 86.6&85.9    & 3.30&27.2 & 4.5&54.7&20.8&1.3 & 39.2&8.4  & 39.7&35.0  &24.0    \\
        + {CHiP}\texttt{\scriptsize{(ICLR25)}} &86.5&\underline{86.8}&\underline{3.32}&\underline{26.5}&\underline{4.2}&54.8&\underline{20.3}&\underline{1.3}&\underline{37.2}& \underline{7.9} & \underline{39.8} & \underline{35.8} & \underline{23.4}\\
        \rowcolor{gray!10} + {OPPO}(ours) & \textbf{87.0} &87.1 & \textbf{3.38}&25.4  & \textbf{4.0}& \textbf{55.8}&19.2 &\textbf{1.2}  & \textbf{35.8} &\textbf{7.7} & \textbf{40.2} &\textbf{36.5} &\textbf{22.3} \\ 
        \midrule
        {LLaVA-NeXT}
        &85.5&90.3&3.50&40.6&8.7&61.1&49.7&4.2&11.3& 6.6 &4.4 &8.9 & 45.2\\
        + {DPO}
        &85.2&84.3&3.41&45.8&6.2&\textbf{61.2}&38.3&3.1&9.0 & 5.9 &\textbf{5.7} & 12.1 & 42.1\\
        + {mDPO}
        &87.8&82.0&3.49&40.6&4.2&57.9&28.6&\textbf{1.8}&8.7 & 5.4 & 4.8 &10.4 &34.6\\
        + {CHiP}\texttt{\scriptsize{(ICLR25)}} 
        &\textbf{88.0}&\underline{91.8}&\underline{3.51}&\underline{39.6}&\underline{4.0}&57.6&\underline{27.2}&\underline{1.9}&\underline{5.4}& \underline{4.2} & 5.3 & \underline{12.7} &\underline{33.5} \\
        \rowcolor{gray!10} + {OPPO}(ours)  &\underline{87.9}&\textbf{92.5}&\textbf{3.54}&\textbf{39.5}&\textbf{3.9}&\underline{58.0}&\textbf{26.4}&\underline{1.9}&\textbf{4.3} & \textbf{3.0} &\underline{5.6} &\textbf{13.0} &\textbf{32.8} \\
        \bottomrule
        \end{tabular}
    }
    \vspace{-1em}
\end{table*}

\noindent\textbf{Datasets and Evaluation.} 
To rigorously assess the effectiveness of our proposed method, we conduct a comprehensive set of experiments across OBJHal (CHAIR) benchmark~\cite{rohrbach2018object}, POPE~\cite{li2023evaluating}, MMHal-Bench~\cite{sun2023aligning}, HallusionBench~\cite{guan2024hallusionbench}, AMBER~\cite{wang2023llm}, TextHalu-Bench~\cite{shu2025semantics}, KIE-HVQA~\cite{he2025seeing}, and general benchmark MMBench\cite{liu2024mmbench}, MMMU \cite{hendrycks2020measuring}, LLaVA-Wild \footnote{https://huggingface.co/datasets/lmms-lab/llava-bench-in-the-wild}.
More details are in the Appendix \ref{apx:benchmarks}.\vspace{0.25em}

\noindent
\textbf{Implementation Details.}  We train LLaVA-1.5-7B, Qwen2.5-VL-7B, and LLaVA-Next-8B with a learning rate of 5e-7 and a batch size of 32, and take about two hours on 4 H20 GPUs per epoch, for a total of 2 epochs. All settings of baseline methods follow the default configurations from the original papers. We set warmup ratio to 0.03, and set $\beta=0.5$, $\gamma_2=0.1$. The enhanced images are offline built based on the original image of the forward attentional warping process, for our 8K samples take few hours. The rejected images are built based on the original input image of the forward diffusion process at 500 steps. For fairness, all compared preference-optimization baselines are re-evaluated in the same pipeline with greedy decoding (temperature $=0$), and the same judge model is employed whenever a benchmark requires model-based evaluation.  More details are in Appendix \ref{apx:implement}. 

\noindent
\textbf{Baselines.}
We primarily compare \texttt{OPPO} with standard DPO, mDPO, and CHiP \cite{fuchip}. 
While other MLLMs cannot be directly compared due to differences in base models, preference data, and alignment methods, we provide MemVR \cite{zoulook}, RLAIF-V \cite{yu2024rlaif}, HALVA \cite{sarkar2024halva}'s results for reference, where MemVR is a representative efficient training-free paradigm, RLAIF-V, HALVA need training.

    \vspace{-0.5em}
\subsection{ Experimental Results}
    \vspace{-0.25em}
We evaluate \texttt{OPPO} as a general-purpose, evidence-aware alignment objective across generalized hallucination, text-scene hallucination, and general benchmarks. Results are listed in Table~\ref{tb:sota}, \ref{tab3:text}, \ref{tab4:general}.

\begin{figure}[t]
\addtolength{\subfigcapskip}{-5pt}
  \centering\vspace{-1em}
        \subfigure[Vanilla (LLaVA-NeXT)] {\hspace{-0.4cm} 
        \includegraphics[width=0.255\columnwidth]{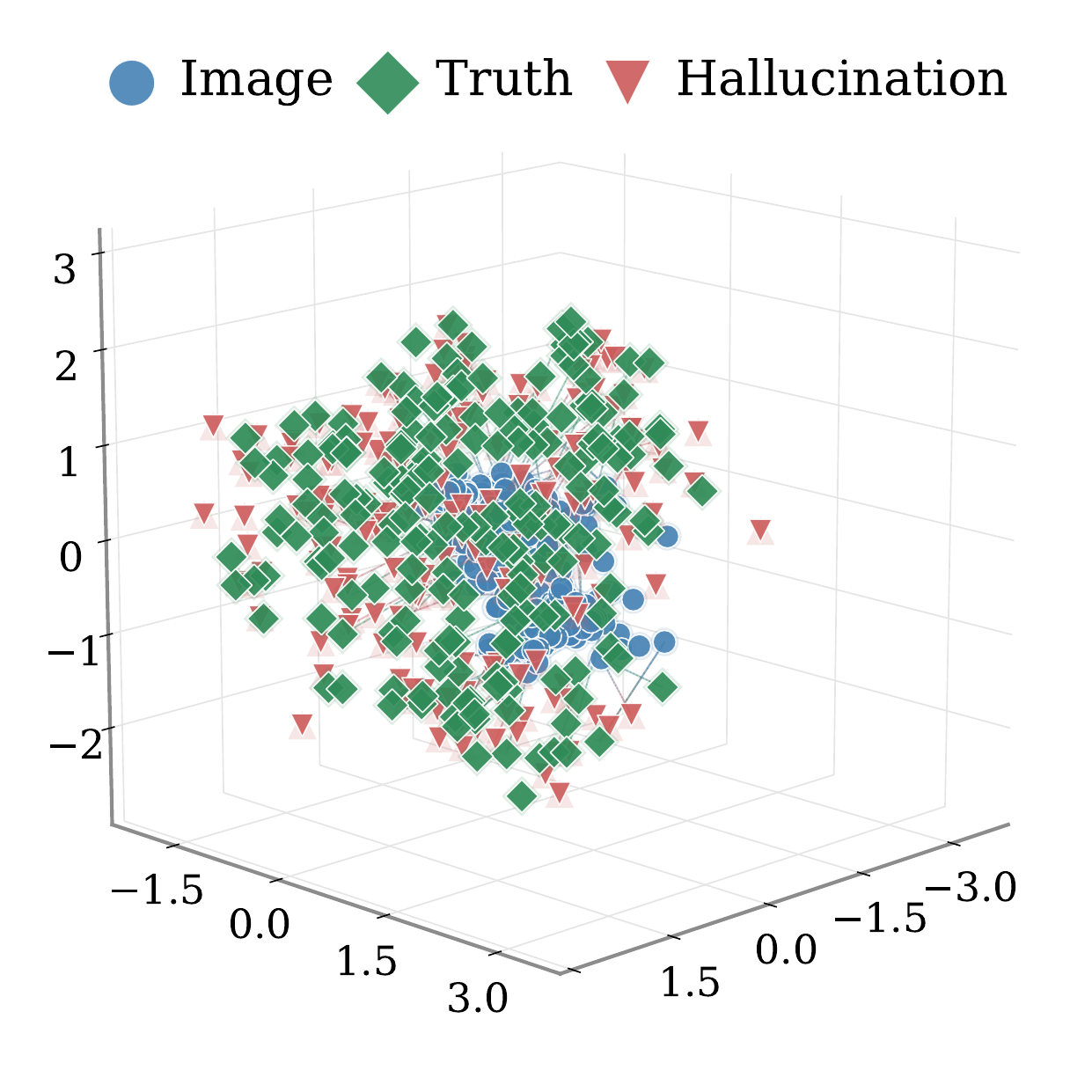}
        }\hspace{-0.25cm}
        \subfigure[Vanilla with mDPO] { 
        \includegraphics[width=0.255\columnwidth]{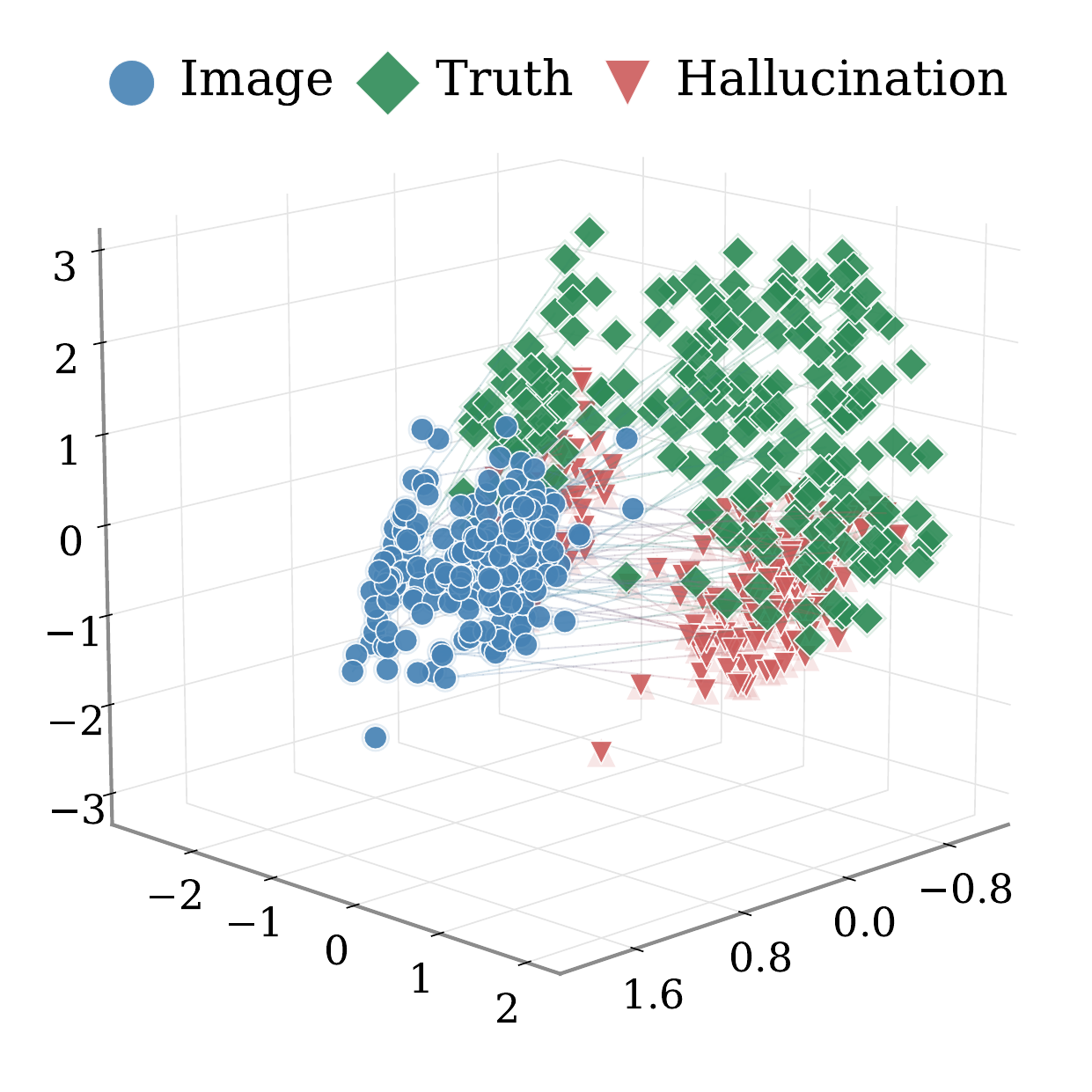}
        }\hspace{-0.25cm}
        \subfigure[Vanilla with CHiP] { 
        \includegraphics[width=0.255\columnwidth]{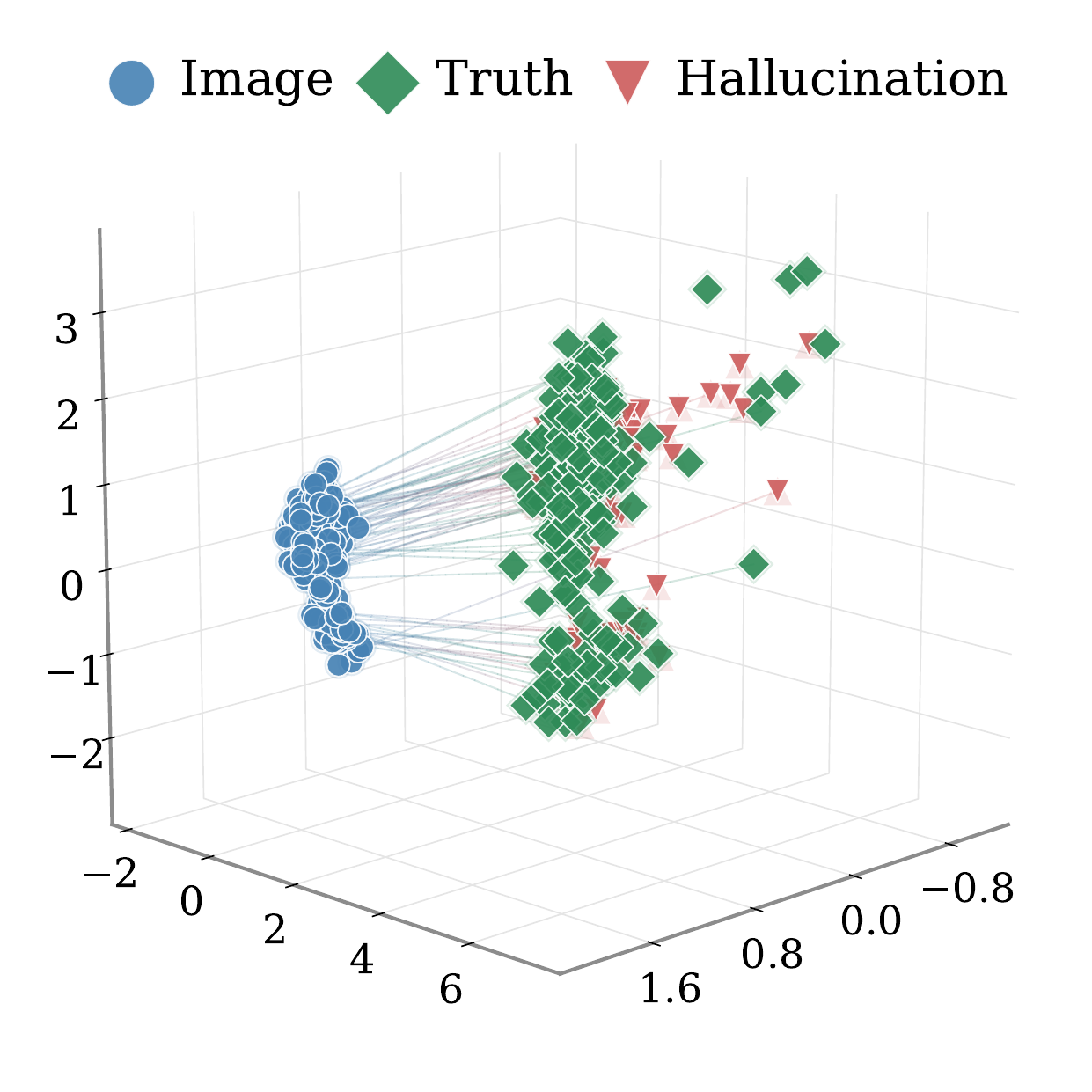}
        }\hspace{-0.25cm}
        \subfigure[Vanilla with OPPO (ours)] { 
        \includegraphics[width=0.255\columnwidth]{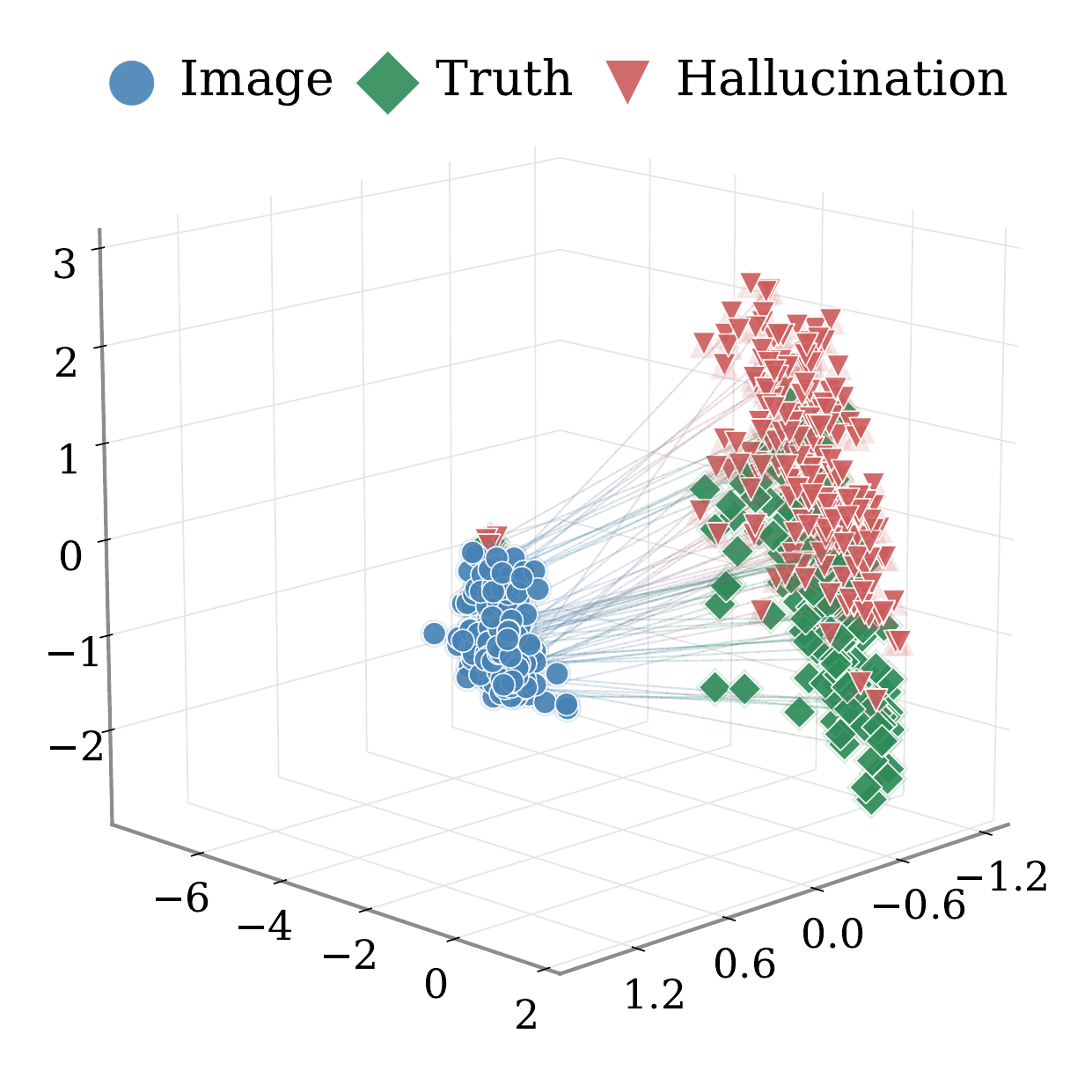}\hspace{-0.3cm}\vspace{-5em}
        }\vspace{-0.75em}
\caption{PCA visualization of representation distributions on AMBER dataset (\textit{200 samples}). (a) Vanilla: significant overlap between image, hallucination, and truth embeddings. (b) DPO: separates truth from hallucination but fails to distinguish image features. (c) CHiP: distinguishes images but shows poor separation between truth and hallucination. (d) OPPO: yields the discriminative clusters.}
\label{fig:pca} \vspace{-1.5em}
\end{figure}
\textbf{Generalized Hallucination}. 
Under matched-backbone comparisons, Table \ref{tb:sota} shows that OPPO delivers consistent and competitive gains over strong preference-optimization baselines. On LLaVA-1.5 \cite{liu2024llava}, Qwen2.5VL \cite{Qwen2.5-VL}, LLaVA-NeXT \cite{liu2024llavanext}, OPPO achieves the best performance, while also improving key grounding-sensitive metrics such as MMHal score, AMBER hallucination rate, and OBJHal CHAIR, which illustrate learning preferences of visual evidence strength is effective on generalized hallucination settings.
Beyond average scores, OPPO exhibits favorable object-level grounding behavior. While methods like DPO and mDPO often trade precision for recall on OBJHal, OPPO reduces both $\text{CHAIR}_s$ and $\text{CHAIR}_i$, especially on LLaVA-NeXT, where it lowers $\text{CHAIR}_i$ to 3.0. On HallusionBench, OPPO also achieves the best fAcc among the compared alignment baselines.

\begin{wraptable}{r}{0.5\textwidth}
\centering
\vspace{-1.25em}
\caption{\small The results on text hallucination benchmarks.} \vspace{-0.5em}
\scalebox{0.75}{
\begin{tabular}{@{}l|*{4}{>{\centering\arraybackslash}p{1.cm}}*{1}{>{\centering\arraybackslash}p{1.35cm}}@{}}
\toprule
 \multirow{2}{*}{Method} & 
        \multicolumn{2}{c}{{KIE-HVQA}} & \multicolumn{2}{c}{TextHalu-Bench} & TextVQA \\  \cmidrule[0.5pt](lr){2-3} \cmidrule[0.5pt](lr){4-5} \cmidrule[0.5pt](l){6-6}
        &Acc.$\uparrow$ & Sim.$\uparrow$ &S-F1$\uparrow$ & U-F1$\uparrow$ &Acc.$\uparrow$ \\
\midrule
Baseline &43.47 & 59.67 & 19.4 & 33.2 & 64.9  \\ 
+ DPO &\underline{28.97} &\underline{42.52}& 20.3 & 35.0 & 65.2  \\
$\triangle$(\textit{vs.} base) & \textcolor{Bittersweet}{-14.50} & \textcolor{Bittersweet}{-17.15} & \textcolor{gray}{+0.9} & \textcolor{gray}{+1.8} & \textcolor{gray}{+0.3} \\
+ mDPO &27.76 & 40.57 & 21.2 & 34.3 & \underline{66.2}  \\
$\triangle$(\textit{vs.} base) & \textcolor{Bittersweet}{-15.71} & \textcolor{Bittersweet}{-19.10} & \textcolor{gray}{+1.8} & \textcolor{gray}{+1.1} & \textcolor{gray}{+1.3} \\
+ CHiP &28.40 & 41.70 & \textbf{21.7} & \textbf{35.6} & 65.9  \\
$\triangle$(\textit{vs.} base) & \textcolor{Bittersweet}{-15.07} & \textcolor{Bittersweet}{-17.97} & \textcolor{gray}{+2.3} & \textcolor{gray}{+2.4} & \textcolor{gray}{+1.0} \\
\rowcolor{gray!10}+ OPPO &\textbf{31.57} & \textbf{47.39} & \underline{21.4} & \underline{34.9} & \textbf{66.3}  \\
$\triangle$(\textit{vs.} base) & \textcolor{Bittersweet}{-11.90} & \textcolor{Bittersweet}{-12.28} & \textcolor{gray}{+2.0} & \textcolor{gray}{+1.7} & \textcolor{gray}{+1.4} \\
\bottomrule
\end{tabular}}
\vspace{-1,em}
\label{tab3:text}
\end{wraptable}
\textbf{Text Scene Hallucination}.
Beyond generalized visual tasks, we investigate the impact of preference optimization on fine-grained text perception using the KIE-HVQA and TextHalu benchmarks. A persistent challenge in multimodal alignment is the \textit{``alignment tax''}, where mitigating hallucinations easily degrades the model's raw recognition capabilities.
Table \ref{tab3:text} shows that OPPO better counters this degradation than competing post-training strategies, retaining stronger accuracy and similarity on KIE-HVQA while remaining competitive on TextHalu. Particularly, text scene benchmarks are more sensitive to local mistakes, and post-training methods can easily improve refusal or caution while damaging recognition quality. OPPO is effective here because its evidence-aware objective encourages the model to capture stronger visual evidence for faithful generation, rather than simply becoming more conservative. 

\begin{wraptable}{l}{0.45\textwidth}
\centering
\vspace{-1.25em}
\caption{\small Results on general-purpose benchmark.} \vspace{-0.4em}
\scalebox{0.75}{
\begin{tabular}{@{}l|*{4}{>{\centering\arraybackslash}p{1.21cm}}@{}}
\toprule
Method & MMBench & MMMU & LLaVAwild & Avg. \\
\midrule
Baseline & 68.1 & 31.8 & 74.7 & 58.2 \\ 
+ DPO & 67.2 & 30.9 & 75.5 & 57.9 \\ 
$\triangle$(\textit{vs.} base) & \textcolor{Bittersweet}{-0.9} & \textcolor{Bittersweet}{-0.9} & \textcolor{gray}{+0.8} & \textcolor{Bittersweet}{-0.3} \\ 
+ mDPO & \underline{67.8} & \underline{31.5} & \underline{75.9} & \underline{58.4} \\ 
$\triangle$(\textit{vs.} base) & \textcolor{Bittersweet}{-0.3} & \textcolor{Bittersweet}{-0.3} & \textcolor{gray}{+1.2} & \textcolor{gray}{+0.2} \\ 
+ CHiP & \textbf{68.3} & 31.0 & 75.8 & \underline{58.4} \\ 
$\triangle$(\textit{vs.} base) & \textcolor{gray}{+0.2} & \textcolor{Bittersweet}{-0.8} & \textcolor{gray}{+1.1} & \textcolor{gray}{+0.2} \\ 
\rowcolor{gray!10}+ OPPO & 67.5 & \textbf{32.3} & \textbf{76.1} & \textbf{58.6} \\ 
$\triangle$(\textit{vs.} base) &\textcolor{Bittersweet}{-0.6} & \textcolor{gray}{+0.5} & \textcolor{gray}{+1.4} &\textcolor{gray}{+0.4} \\ 
\bottomrule
\end{tabular}}
\vspace{-1.15em}
\label{tab4:general}
\end{wraptable}
\textbf{General Capability.} 
To ensure that hallucination-specific tuning does not compromise the model's fundamental reasoning, we report performance on general-purpose benchmarks. As show in Table \ref{tab4:general}, \texttt{OPPO} maintains or even slightly enhances performance on these tasks compared to the baseline. For example, on MMMU, which requires complex domain knowledge, \texttt{OPPO} avoids the performance drop observed in CHiP, suggesting that our objective preserves the model's internal parameterized knowledge.

\textbf{Representation Analysis.} We visualize representation geometry via PCA \cite{mackiewicz1993principal} on AMBER data.  In practice, we extract the average hidden state of the last layer for only image, image with ground-truth, image with wrong answer as inputs, then map them to the same latent space by PCA. As shown in Figure \ref{fig:pca}, the baseline LLaVA-NeXT exhibits highly entangled features, hindering discrimination between facts and fabrications. Among comparators, mDPO separates text types but suffers a modality gap, while CHiP isolates images but fails to disentangle truthful versus hallucinatory texts. In contrast, OPPO produces a clearer separation between image, truthful, and hallucinatory states, which is qualitatively consistent with the intended effect of evidence-aware alignment. 

\textbf{Shifts of Distribution Gaps.}
We analyze how preference optimization shifts the log-probability distributions between accurate and hallucinatory samples. As illustrated in Figure \ref{fig:logpdis}, both DPO and OPPO widen the distribution gap $\Delta$ in vision-conditioned scenarios (I). In the response-contrast setting, OPPO achieves a more pronounced shift of +0.50 compared to +0.44 for DPO, suggesting stronger discrimination when visual evidence is present. We also examine the textual-only conditioned case (II) to estimate residual reliance on language priors. Here OPPO yields a slightly smaller shift than DPO, +0.51 \textit{vs.} +0.53, which is favorable as it suggests that the model is relying less on textual priors alone and more on actual visual evidence when deciding whether a response is trustworthy.

\begin{figure}[t]
  \centering \vspace{-1em}
  \begin{minipage}{0.64\textwidth}
    \centering \hspace{-0.3cm}
    \subfigure[\small$\;\;$Response-Contrast DPO]{
      \includegraphics[width=0.5\linewidth]{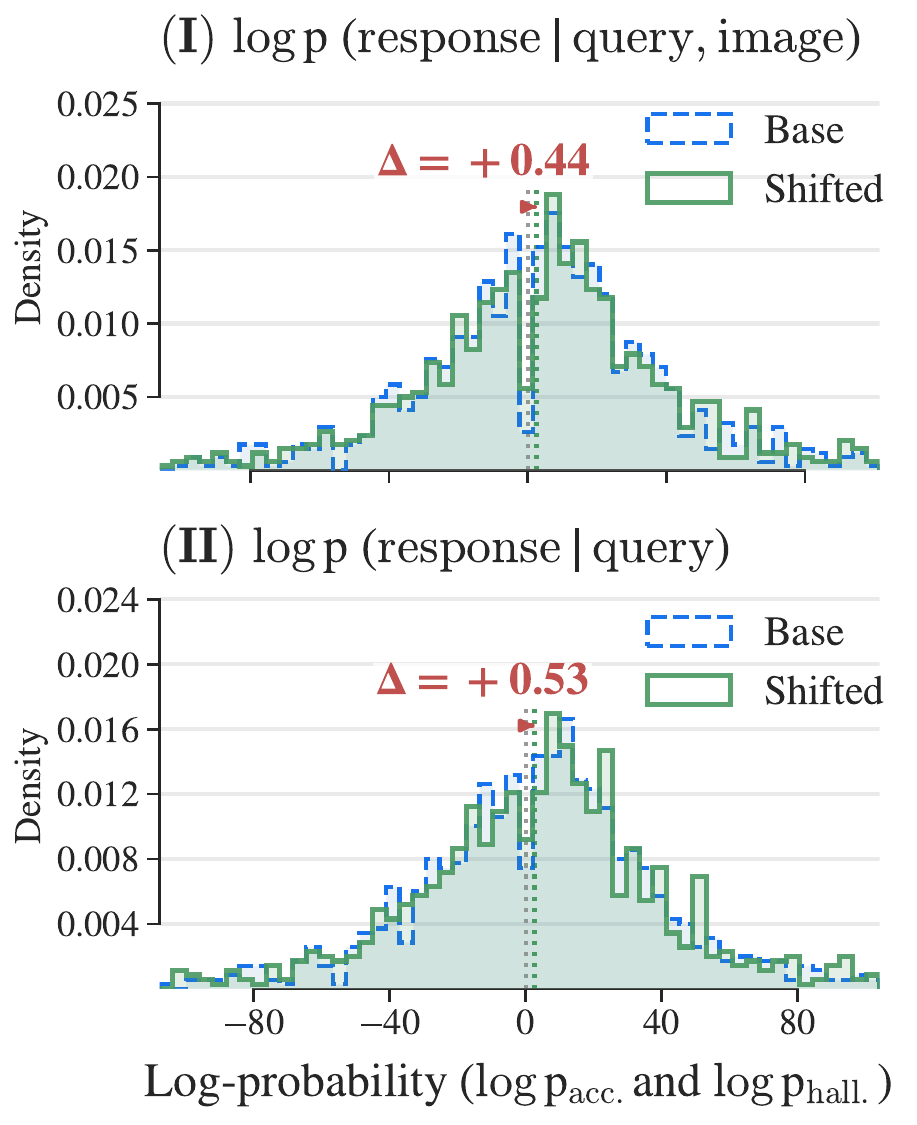}\vspace{-0.1cm} 
    }\hspace{-0.15cm}
    \subfigure[\small$\;\;$Response-Contrast OPPO]{
      \includegraphics[width=0.5\linewidth]{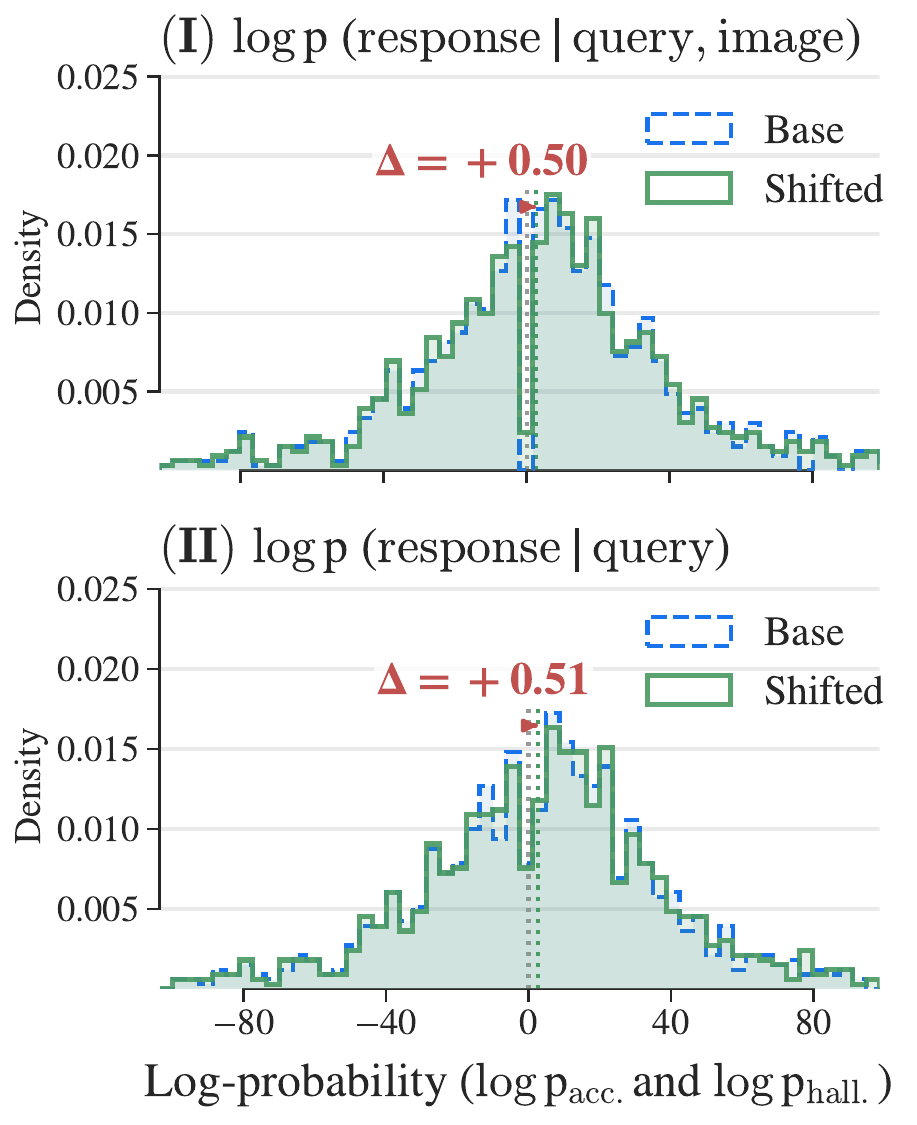}\hspace{-0.15cm} 
    }\vspace{-0.3cm} 
    \caption{Comparison of distribution-gap shifts. The x-axis denotes normalized log-probability, and density denotes the normalized empirical distribution. (I), and (II) illustrate the distributions of vision-conditioned, and textual-only-conditioned generations.}
    \label{fig:logpdis}
  \end{minipage}\hspace{0.7em}
  \begin{minipage}{0.3\textwidth}
    \centering \vspace{0.25cm}
    \subfigure{\hspace{-0.2cm} 
      \includegraphics[width=1\linewidth]{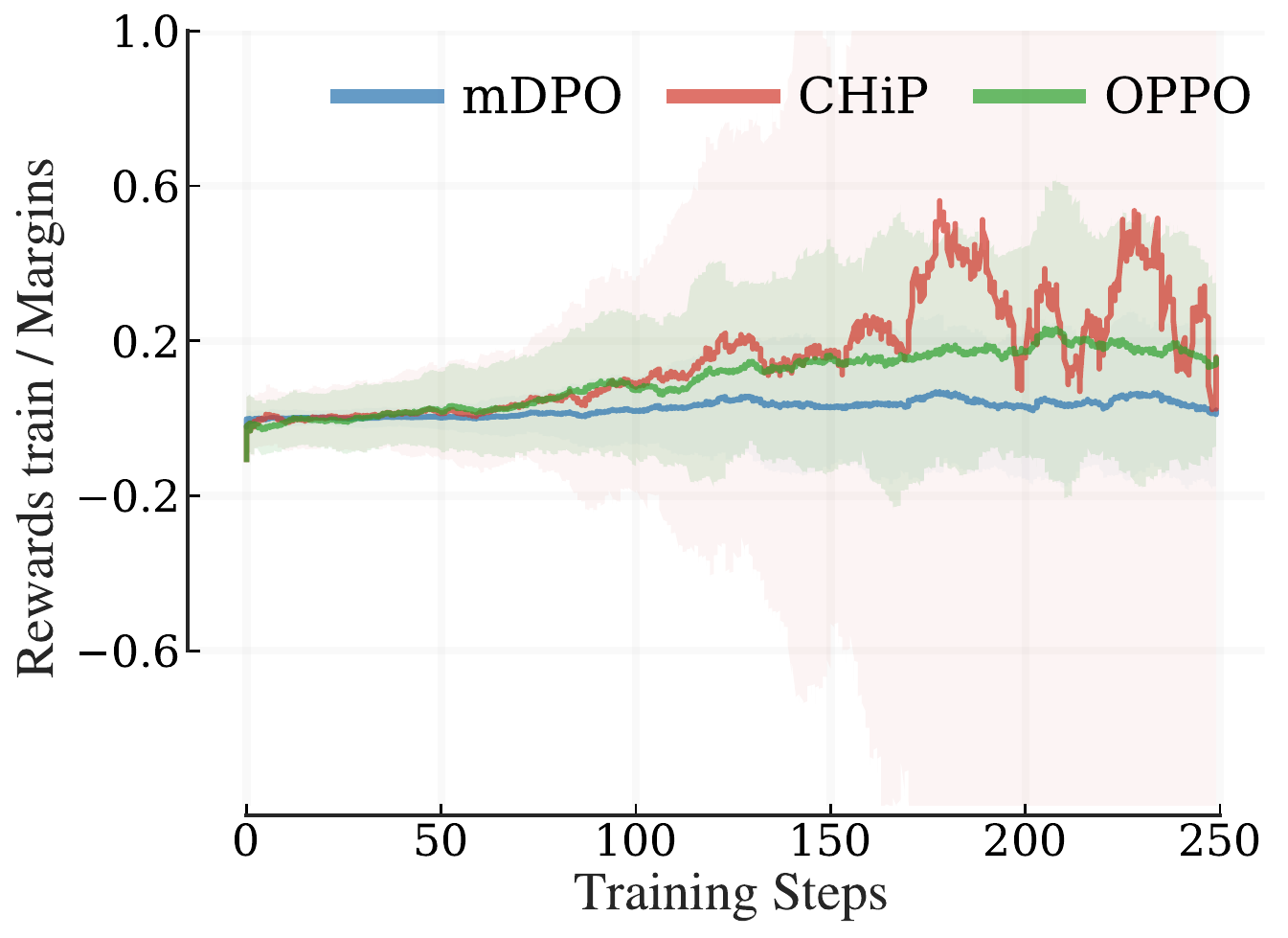} \hspace{-0.15cm} 
    }\\ \vspace{-1.2cm}
    \subfigure{\hspace{-0.2cm} 
      \includegraphics[width=1\linewidth]{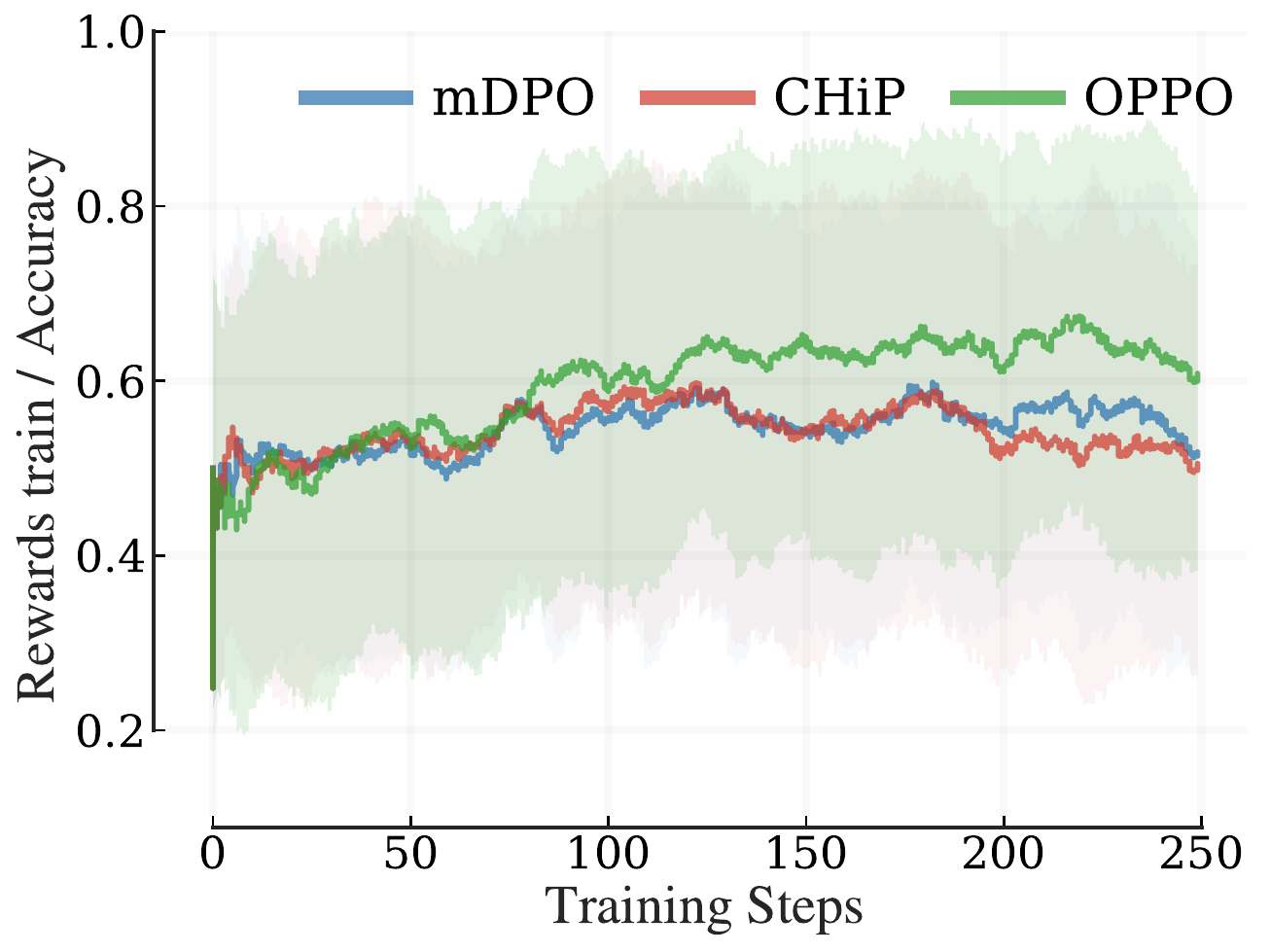}
      \hspace{-0.3cm} 
    }\\ \vspace{-0.05cm}
    {\small(c)$\;\;$Training Dynamics}\vspace{-0.3em}
    \caption{Reward margins and reward accuracies over 250 training steps. OPPO exhibits better stability than baselines.}
    \label{fig:reward_metrics}
  \end{minipage}\vspace{-2em}
\end{figure}

\textbf{Reward Metrics Visualization.}
Beyond static representation analysis, we further investigate training dynamics. We track both metrics over 250 training steps, as illustrated in Figure \ref{fig:reward_metrics}. Here, \emph{reward margin} denotes the implicit-reward difference assigned to the positive and negative elements in a preference pair, and \emph{reward accuracy} denotes the fraction of pairs whose margin has the correct sign. 
The reward margin trajectories reveal distinct learning characteristics. mDPO follows a flat and conservative path, struggling to establish a significant margin between positive and negative pairs. While CHiP achieves higher margins, it exhibits substantial volatility, evidenced by the erratic fluctuations throughout the training process. In contrast, OPPO demonstrates a robust and steady ascent, which without the variance observed in CHiP, indicating a more stable learning signal. Besides, for the reward accuracy in Fig. \ref{fig:reward_metrics} (bottom), we can observe that OPPO consistently outperforms both baselines, achieving higher accuracy levels early in training and maintaining this advantage until convergence, demonstrating superior effectiveness and algorithmic stability during training.

\begin{wraptable}{l}{0.55\textwidth} 
\centering
\vspace{-1.25em}
\caption{\small Ablation study on hallucination-related benchmarks.} \vspace{-0.5em}
\scalebox{0.72}{ 
\begin{tabular}{@{}l|*{6}{>{\centering\arraybackslash}p{1.cm}}@{}}
\toprule
 \multirow{2}{*}{Methods} & 
        POPE & MMHal & AMBER &OBJHal & \multicolumn{2}{c}{HallusionBench}  \\  \cmidrule[0.5pt](lr){2-2} \cmidrule[0.5pt](lr){3-3} \cmidrule[0.5pt](lr){4-4} \cmidrule[0.5pt](lr){5-5} \cmidrule[0.5pt](l){6-7}
        & Acc$\uparrow$ & Score$\uparrow$ & CHAIR$\downarrow$ & CHAIR$_s$$\downarrow$ & $\;\;$qAcc$\uparrow$ & fAcc$\uparrow$ \\
\midrule
LLaVA-NeXT & 85.5 & 3.50 & 8.7 & 11.3 & 4.4 &8.9 \\ 
\midrule
\rowcolor{gray!10} + OPPO & \textbf{87.9} & \textbf{3.54} & \textbf{3.9} & \textbf{4.3} & \textbf{5.6} &\textbf{13.0}\\
$\triangle$ (\textit{vs.} base) &\textcolor{gray}{+2.4}$\;$ & \textcolor{gray}{+0.04}$\;$ & \textcolor{gray}{-4.8}$\;$ & \textcolor{gray}{-7.0}$\;$ & \textcolor{gray}{+1.2}$\;$ & \textcolor{gray}{+4.1}$\;$ \\
w/o $L_{token}$ & 87.0 & 3.51 & 5.1 & 6.6 & 5.4& 12.7 \\ 
w/o $L_{span}$ & 86.8 & 3.50 & 5.4 & 7.0 & 5.1&11.4 \\
w/o $L_{DePO}$ & 87.1 & 3.52 & 4.2 & 5.7 & 5.4 &12.6 \\
w/o $L_{FPO}$ & 86.5 & 3.42 & 4.9 & 9.0 & 5.2 &11.0 \\
\bottomrule
\end{tabular}}
\vspace{-0.5em}
\label{tab:ablation_oppo}
\end{wraptable}
\textbf{Ablation Study.} We conduct a comprehensive ablation study on LLaVA-NeXT to dissect the contribution of each component in \texttt{OPPO}, as summarized in Table \ref{tab:ablation_oppo}. Our findings reveal that the evidence-aware preference is the most critical driver for hallucination reduction. Removing $L_{FPO}$ results in the sharpest performance drop across nearly all metrics, \textit{i.e.}, $\text{CHAIR}_s$ from 4.3 to 9.0, which confirms our hypothesis that explicitly modeling preferences over visual evidence strength is essential for guiding the model away from language priors. The fine-grained regularization framework provide vital stability during the alignment process, $L_{token}$  ensures trajectory-level consistency, the removal of $L_{span}$ significantly impacts performance, which suggests that anchoring preferences to specific answer-bearing spans is crucial for preventing the model from losing track of visual details during long-form generation. 

\begin{wraptable}{r}{0.47\textwidth}
\centering \footnotesize
  \renewcommand\tabcolsep{1pt}
  \vspace{-1.5em}
  \caption{Results on different hyperparameters.}
\label{tab6:hyperparameter}
  \vspace{-0.6em}
    \resizebox{0.47\textwidth}{!}
    {
   \begin{tabular}{l*{1}{>{\centering\arraybackslash}p{3.2em}}*{1}{>{\centering\arraybackslash}p{3.6em}}*{1}{>{\centering\arraybackslash}p{4em}}*{1}{>{\centering\arraybackslash}p{3.em}}}
    \toprule
    \multirow{2}{*}{Method} & MMHal & AMBER & OBJHal\;\; & Avg. $\downarrow$ \\ \cmidrule[0.5pt](lr){2-2} \cmidrule[0.5pt](lr){3-3} \cmidrule[0.5pt](lr){4-4}
    &Hal. $\downarrow$ &Hal. $\downarrow$ & CHAIR$_s$$\downarrow$ \\
    \midrule
    $\beta_1 = 0.01, \beta_2 = 0.1$ & 58.4 & 22.1 & 33.2\; & 37.90 \\
    $\beta_1 = 0.05, \beta_2 = 0.1$ & 54.8 & 18.5 & 30.6\; & 34.63 \\
    \rowcolor{gray!10} $\beta_1 = 0.1, \beta_2 = 0.1$ & 52.0 & 14.9 & 28.9\; & \textbf{31.93} \\
    $\beta_2 = 0.01, \beta_1 = 0.1$ & 57.3 & 25.6 & 38.4\; & 40.43 \\
    $\beta_2 = 0.05, \beta_1 = 0.1$ & 53.9 & 17.2 & 31.5\; & 34.20 \\
    \bottomrule
    \end{tabular}
    }
  \vspace{-1.25em}
\end{wraptable}
\textbf{Hyperparameter Analysis.} We investigate the sensitivity of \texttt{OPPO} to its two primary hyperparameters: $\beta_1$, which controls the strength of the evidence-aware preference, and $\beta_2$, which governs the fine-grained textual regularization. As shown in Table \ref{tab6:hyperparameter}, we observe that increasing $\beta_1$ from 0.01 to 0.1 significantly reduces the hallucination rate across all benchmarks, with the average score dropping from 37.90 to 31.93. This trend underscores the importance of the evidence-level objective in forcing the model to prioritize visual cues. Besides, a very low $\beta_2$ leads to a noticeable performance degradation, suggesting that fine-grained textual grounding is necessary to prevent the model from drifting during preference optimization.

\begin{wraptable}{r}{0.4\textwidth}
\centering \footnotesize
  \renewcommand\tabcolsep{2pt}
  \vspace{-1.5em}
  \caption{Robustness to attention priors.}
\label{tab:attention_robustness}
  \vspace{-0.6em}
    \resizebox{0.4\textwidth}{!}
    {
   \begin{tabular}{l*{3}{>{\centering\arraybackslash}p{3.6em}}}
    \toprule
    Method& POPE$\uparrow$ & MMHal$\uparrow$ & OBJHal$\downarrow$ \\ 
    \midrule
    LLaVA-NeXT & 85.5 & 3.50 & 11.3 \\
    + mDPO  & 87.8 & 3.49 & 8.7 \\
    \rowcolor{gray!10}  + OPPO$\dagger$ & 86.4 & 3.48 & 7.5 \\
    \rowcolor{gray!10} + OPPO (naive) & \textbf{87.9} & \textbf{3.54} & \textbf{4.3} \\
    \bottomrule
    \end{tabular}
    }
  \vspace{-1.25em}
\end{wraptable}
\textbf{Robustness to Attention Priors.} 
To further verify whether magnifying flawed attention priors in $v_w$ causes error amplification, we address this empirically and structurally. First, modern MLLMs possess strong baseline grounding; our analysis shows only $\sim$5.4\% of attention peaks in our training set severely misalign with ground-truth bounding boxes. Second, we test an adversarial variant, OPPO$\dagger$, which applies random spatial shifts to the heatmaps to forcibly magnify irrelevant regions. As shown in Table \ref{tab:attention_robustness}, while performance drops compared to standard OPPO, it avoids catastrophic collapse, which demonstrates the effectiveness of our strategy.

As shown in Fig.~\ref{fig:cases}, \texttt{OPPO} produces more visually grounded predictions across multiple failure modes, including scene-text recognition, object presence verification, counting, and local attribute reasoning.

\vspace{-0.5em}\section{Related Work}\vspace{-0.25em}
Recent efforts to mitigate MLLM hallucination include specialized fine-tuning \cite{gunjal2024detecting}, preference optimization like RLHF \cite{yu2024rlhf} and DPO \cite{rafailov2023direct}, and inference-time interventions such as OPERA \cite{opera2024cvpr} and MemVR\cite{zoulook}. In text-rich scenarios, models often suffer from non-semantic hallucinations \cite{liu2023hidden} despite improvements in resolution \cite{li2024monkey}. While recent works address text-specific issues through layer activation correction \cite{shu2025semantics} or refusal strategies \cite{he2025seeing}, they often introduce latency or domain-specific constraints. Unlike existing coarse-grained preference optimization \cite{sun2024aligning, shao2024deepseekmath,xie2024v,wang2024mdpo,liu2025mitigating}, our work proposes an evidence-aware preference optimization framework that also transfers to demanding visual grounding settings. Please refer to the Appendix \ref{sec:relatedwork} for a comprehensive literature review.
\begin{figure}
    \centering \vspace{-0.75em}
    \includegraphics[width=1\linewidth]{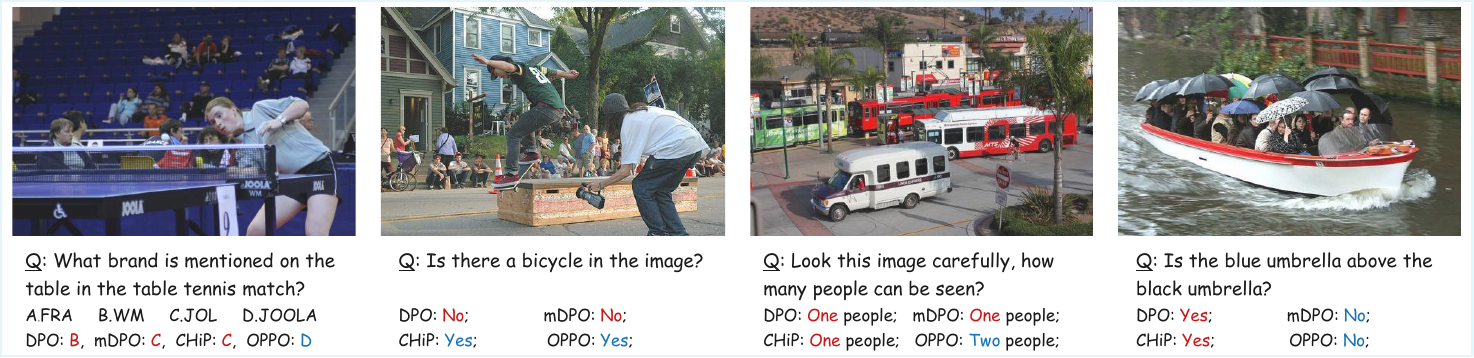}\vspace{-0.25em}
    \caption{Qualitative cases showing improved visual grounding with \texttt{OPPO} compared with baselines.}
    \label{fig:cases} \vspace{-1em}
\end{figure}

\vspace{-0.5em}
\section{Limitation}
\vspace{-0.25em}
For the performance drop on KIE-HVQA, we believe it is partly due to the nature of this benchmark: its prompts are substantially longer and more instruction-heavy than those in standard OCR-style evaluation, and the benchmark is primarily designed to assess very large models with stronger long-context instruction-following ability. Under this setting, post-training a 7B-scale model for hallucination mitigation may introduce a mild instruction-following tax, which can in turn reduce performance on tasks that rely heavily on parsing long and complex prompts, even when visual grounding itself is improved. Our empirical study is still limited to a small number of model families and a fixed set of hyperparameters, so broader cross-family validation, stronger robustness analysis, and more extensive sensitivity studies remain important directions for future work.

\vspace{-0.5em}
\section{Conclusion}\vspace{-0.25em}
We revisit multimodal hallucination through the lens of evidence utilization. Motivated by the empirical finding that strengthening query-relevant attended evidence improves grounded generation, we propose \texttt{OPPO}, an evidence-aware preference optimization framework that aligns response preference with ordered visual evidence strength. Across hallucination and general-purpose benchmarks, \texttt{OPPO} consistently outperforms strong baselines under matched settings, while our theoretical analysis suggests a positive lower bound on local visual sensitivity along the constructed evidence path, which demonstrates evidence-aware alignment as a promising direction for mitigating hallucination.

\clearpage
{\small
\bibliographystyle{plain}
\bibliography{example_paper}
}

\newpage

\appendix
\onecolumn

\section{Related Work}\label{sec:relatedwork}
\textbf{Mitigating Generalized Hallucination in MLLMs.}
Extensive research has investigated the origins of hallucinations in MLLMs \cite{yin2023survey, zhou2023analyzing, bai2024hallucination}. Early efforts focused on fine-grained modality alignment \cite{rohrbach2018object} and mitigating co-occurrence biases \cite{kim2023exposing} in small-scale models. Recent strategies include specialized fine-tuning \cite{gunjal2024detecting}, RLHF \cite{yu2024rlhf}, DPO \cite{rafailov2023direct}, and post-hoc revisors like LURE \cite{zhou2023analyzing}, which edits potential hallucinations. Besides, there are attentional intervention methods like OPERA \cite{opera2024cvpr} that mitigate hallucinations without extra data or training \cite{zhang2024seeing, xing2024mitigating}, yet they introduce higher inference latency. Alternatively, Contrastive Decoding (CD) methods, such as VCD \cite{vcd2024cvpr} and ICD \cite{wang2024mitigating}, adjust the logit distribution to suppress hallucinations \cite{chuang2023dola, chenhalc, neo2024vord}. Nevertheless, CD-based approaches can introduce potential noise into the distribution, leading to inconsistent performance improvements. More recently, MemVR \cite{zoulook} employs memory-space visual retracing for output verification, but such iterative refinement also comes at the cost of increased inference time. To address the computational overhead of inference-time interventions, recent advancements have pivoted towards efficient training-time alignment, particularly leveraging DPO to fundamentally reduce hallucinatory tendencies. Early attempts like HA-DPO \cite{zhao2023beyond} formulate hallucination mitigation as a preference ranking problem, constructing positive and negative pairs to directly penalize hallucinatory content. Building on this, V-DPO \cite{xie2024v} argues that over-reliance on language priors is a primary cause of hallucination and introduces vision-guided preference learning to better anchor generations to the image context. Moving towards more granular control, HDPO \cite{fu2025mitigating} categorizes and targets specific failure modes, such as multimodal conflicts and long-context degradation, by constructing targeted preference data, while \cite{xiao2025detecting} utilize fine-grained AI feedback to synthesize high-quality preference datasets via a detect-then-rewrite pipeline. From a theoretical perspective, recent studies identify optimization bottlenecks in standard DPO when applied to MLLMs. OPA-DPO \cite{yang2025mitigating} reveals that the distribution shift from off-policy data hinders effective alignment, and proposes an on-policy strategy to bridge this gap. Similarly, SymMPO \cite{liu2025mitigating} introduces a symmetric preference optimization objective with margin consistency to ensure rigorous theoretical alignment, thereby achieving more robust hallucination mitigation. P$^2$DPO \cite{p2DPO} integrates perceptual processing with calibration-based preference optimization to further ground model generations in vision and mitigate hallucination.

\textbf{OCR Hallucination and Its Mitigation.} 
Text-rich scenarios present unique grounding challenges, where models often succumb to “semantic hallucination”, generating plausible but visually incorrect text driven by language priors rather than visual evidence \cite{liu2023hidden}. Comprehensive evaluations in benchmarks like OCRBench \cite{liu2023ocrbench} -v2\cite{fu2024ocrbenchv2improvedbenchmark} and TextSquare \cite{tang2024textsquare} reveal that standard MLLMs struggle significantly with dense text. While scaling input resolution \cite{li2024monkey, ye2023mplug} improves recognition, specific hallucination issues persist. Very recently, \cite{shu2025semantics} addressed this by identifying conflicts between semantic and visual cues, proposing a training-free intervention to correct layer activations, but such inference-time mechanisms inevitably increase latency. Besides, \cite{he2025seeing} tackled OCR hallucinations in degraded documents by incorporating uncertainty-aware refusal strategies. Nevertheless, their method focuses on abstention rather than correction and is primarily tailored to specific domains like invoices. Beyond existing studies, we establish a comprehensive benchmark for evaluation, and propose the first oriented pickup preference optimization algorithm to mitigate OCR hallucination.

\textbf{Preference Optimization.} The alignment of MLLMs with human intent has evolved significantly, transitioning from standard Supervised Fine-Tuning (SFT) to more robust Preference Optimization paradigms. Traditional approaches predominantly employ Reinforcement Learning from Human Feedback (RLHF) \cite{ouyang2022training}, which typically leverages Proximal Policy Optimization (PPO) \cite{schulman2017proximal} to maximize expected rewards under KL-divergence constraints. However, PPO is often plagued by training instability, hyperparameter sensitivity, and the high computational overhead of maintaining multiple models (e.g., actor, critic, and reward models). To mitigate these inefficiencies, Group Relative Policy Optimization (GRPO) \cite{shao2024deepseekmath} has emerged as a compelling alternative, estimating baselines from group scores to eliminate the dependency on an explicit value function. Parallelly, Direct Preference Optimization (DPO) \cite{rafailov2023direct} has revolutionized the field by deriving a closed-form solution that reparameterizes the reward function, thereby transforming the complex RL problem into a stable, offline binary classification objective. In the multimodal landscape, recent works such as RLHF-V \cite{yu2024rlhf} and LLaVA-RLHF \cite{sun2024aligning} have successfully adapted these techniques to suppress hallucinations by penalizing non-factual responses. Nevertheless, these existing methods generally rely on coarse-grained rejection at the response or segment level. They treat the visual input as a static condition rather than an active optimization target, lacking the targeted mechanism to enforce fine-grained visual grounding \cite{zou2023dpnet,zou2026don,zou2026learning}.

\paragraph{Feature-level comparison.}
To make the novelty boundary explicit, \autoref{tab:method_feature_comparison} contrasts OPPO with the most relevant diagnostic, inference-time, and preference-optimization approaches. The key difference is structural: OPPO does not merely identify attended evidence or build visual preference pairs; it orders multiple semantically consistent views by evidence strength and aligns the policy to scale its preference for the same faithful response along this evidence axis, further stabilized by span- and token-level calibration.

\begin{table}[t]
\centering
\small
\setlength{\tabcolsep}{3.1pt}
\renewcommand{\arraystretch}{1.13}
\providecommand{\cmark}{\ding{51}}
\providecommand{\xmark}{\ding{55}}
\providecommand{\pmark}{\raisebox{0.2ex}{\scalebox{0.8}{$\triangle$}}}
\caption{Feature-level comparison with closely related hallucination-mitigation and multimodal preference-optimization methods. $\triangle$ denotes partial or implicit coverage. OPPO is distinguished by converting query-aware visual evidence into an ordered preference axis and coupling it with fine-grained textual calibration.}
\label{tab:method_feature_comparison}
\resizebox{\linewidth}{!}{%
\begin{tabular}{lcccccc>{\centering\arraybackslash}p{3.3cm}}
\toprule
\textbf{Method} &
\makecell{\textbf{Policy-level}\\\textbf{evidence gap}} &
\makecell{\textbf{Query-aware}\\\textbf{evidence}} &
\makecell{\textbf{Stronger/}\\\textbf{weaker views}} &
\makecell{\textbf{Ordered}\\\textbf{evidence pref.}} &
\makecell{\textbf{Visual}\\\textbf{pref. opt.}} &
\makecell{\textbf{Span/token}\\\textbf{calibration}} &
\makecell{\textbf{Main mechanism}} \\
\midrule
DPO \cite{rafailov2023direct} & \xmark & \xmark & \xmark & \xmark & \xmark & \xmark & Response-level preference under fixed inputs \\
mDPO \cite{wang2024mdpo} & \pmark & \pmark & \xmark & \xmark & \cmark & \xmark & Conditional multimodal preference over response pairs \\
V-DPO \cite{xie2024v} & \pmark & \pmark & \pmark & \xmark & \cmark & \xmark & Vision-guided DPO to reduce language-prior hallucination \\
CHiP \cite{fuchip} & \pmark & \pmark & \xmark & \xmark & \cmark & \pmark & Cross-modal hierarchical preference optimization \\
SymMPO \cite{liu2025mitigating} & \pmark & \xmark & \xmark & \xmark & \cmark & \xmark & Symmetric preference margins for hallucination mitigation \\
OPERA \cite{opera2024cvpr} & \pmark & \cmark & \xmark & \xmark & \xmark & \xmark & Inference-time attention penalty and retrospection allocation \\
MLLMs Know Where to Look \cite{zhang2025mllms} & \pmark & \cmark & \pmark & \xmark & \xmark & \xmark & Training-free attention-guided perception of small details \\
Seeing but Not Believing \cite{liu2025seeingnotbelieving} & \cmark & \cmark & \pmark & \xmark & \xmark & \xmark & Diagnostic analysis and inference-time evidence highlighting \\
\rowcolor{gray!10}\textbf{OPPO (ours)} & \cmark & \cmark & \cmark & \cmark & \cmark & \cmark & Ordered evidence pickup with span- and token-level calibration \\
\bottomrule
\end{tabular}%
}
\vspace{-0.25em}
\end{table}

\newpage

\section{Theoretical Analysis of OPPO}

We analyze the evidence-aware component of OPPO, since this is the term that directly encodes ordered visual-evidence preferences. For a fixed prompt $x$ and a faithful response $y_w$, define the log-ratio score
\begin{equation}
\ell_\theta(v)
\;:=\;
\log \frac{\pi_\theta(y_w \mid x, v)}{\pi_{\mathrm{ref}}(y_w \mid x, v)}.
\label{eq:score_def}
\end{equation}
For any ordered pair of views $u \succ v$, define the pairwise evidence margin
\begin{equation}
\Delta_{u,v}(\theta)
\;:=\;
\gamma_{u,v}\bigl(\ell_\theta(u)-\ell_\theta(v)\bigr),
\qquad
\gamma_{u,v}>0,
\label{eq:pair_margin}
\end{equation}
and the corresponding pairwise evidence-ranking loss
\begin{equation}
\mathcal{L}_{u \succ v}(\theta)
\;:=\;
-\log \sigma\!\bigl(\Delta_{u,v}(\theta)\bigr).
\label{eq:pair_loss}
\end{equation}
In OPPO, the evidence-aware objective is
\begin{equation}
\mathcal{L}_{\mathrm{EV}}(\theta)
=
\mathcal{L}_{v_w \succ v_a}(\theta)
+
\mathcal{L}_{v_a \succ v_l}(\theta),
\label{eq:ev_loss}
\end{equation}
where $(v_w, v_a, v_l)$ denote the stronger-evidence, anchored, and weaker-evidence views, respectively.

\paragraph{Remark.}
Eq.~\eqref{eq:score_def} is exactly the quantity used by the evidence-aware preference term in the main method. Therefore, the analysis below directly characterizes the optimization behavior of OPPO's ordered evidence objective.

\begin{assumption}[Semantic invariance of the view chain]
\label{ass:semantic_invariance}
The three constructed views $(v_w, v_a, v_l)$ preserve the same semantic target for the query $x$, i.e., the faithful response $y_w$ remains valid across all three views, while only the effective strength of supporting visual evidence changes.
\end{assumption}

\begin{assumption}[Weak-view prior proxy]
\label{ass:prior_proxy}
The weaker-evidence view $v_l$ substantially attenuates the query-relevant evidence, so that $\ell_\theta(v_l)$ is more dominated by language prior than by grounded visual support. This assumption is approximate and is used only to interpret the optimization direction on the weak view.
\end{assumption}

\begin{assumption}[Smoothness]
\label{ass:smoothness}
For fixed $(x,y_w)$, the map $v \mapsto \ell_\theta(v)$ is continuously differentiable on every line segment joining two views in $\{v_w, v_a, v_l\}$.
\end{assumption}

\begin{proposition}[Score-level monotonicity of the pairwise evidence loss]
\label{prop:score_monotonicity}
For any ordered pair $u \succ v$, the pairwise loss $\mathcal{L}_{u \succ v}$ satisfies
\begin{equation}
\frac{\partial \mathcal{L}_{u \succ v}}{\partial \ell_\theta(u)}
=
-\gamma_{u,v}\bigl(1-\sigma(\Delta_{u,v})\bigr)
< 0,
\qquad
\frac{\partial \mathcal{L}_{u \succ v}}{\partial \ell_\theta(v)}
=
\gamma_{u,v}\bigl(1-\sigma(\Delta_{u,v})\bigr)
> 0.
\label{eq:score_derivatives}
\end{equation}
Hence, gradient descent on $\mathcal{L}_{u \succ v}$ pushes the preferred-view score $\ell_\theta(u)$ upward and the dispreferred-view score $\ell_\theta(v)$ downward.
\end{proposition}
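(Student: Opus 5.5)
The plan is a direct chain-rule computation, treating $\ell_\theta(u)$ and $\ell_\theta(v)$ as two independent scalar arguments of $\mathcal{L}_{u \succ v}$. With $\theta$ and the pair $(u,v)$ fixed, the loss in Eq.~\eqref{eq:pair_loss} is the composition $f(\Delta) = -\log\sigma(\Delta)$ with the affine map $\Delta_{u,v} = \gamma_{u,v}(\ell_\theta(u) - \ell_\theta(v))$ from Eq.~\eqref{eq:pair_margin}. I would first record the elementary identity $\sigma'(z) = \sigma(z)(1-\sigma(z))$. This gives
\[
f'(\Delta) = -\frac{\sigma'(\Delta)}{\sigma(\Delta)} = -\bigl(1-\sigma(\Delta)\bigr).
\]

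Next I would compute the partial derivatives of the margin: $\partial \Delta_{u,v}/\partial \ell_\theta(u) = \gamma_{u,v}$ and $\partial \Delta_{u,v}/\partial \ell_\theta(v) = -\gamma_{u,v}$. Multiplying each by $f'(\Delta_{u,v})$ yields exactly the two expressions in Eq.~\eqref{eq:score_derivatives}.

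For the signs, $\sigma$ takes values in the open interval $(0,1)$ for every finite argument, so $1-\sigma(\Delta_{u,v}) > 0$. Combined with $\gamma_{u,v} > 0$, the first derivative is strictly negative and the second strictly positive. The only point needing care is that $\Delta_{u,v}$ is finite. This holds because $\pi_\theta$ and $\pi_{\mathrm{ref}}$ are softmax policies, so both assign strictly positive probability to $y_w$ and the log-ratio scores are finite.

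For the final sentence of the statement, I would apply the chain rule in parameter space:
\[
\nabla_\theta \mathcal{L}_{u\succ v} = -\gamma_{u,v}\bigl(1-\sigma(\Delta_{u,v})\bigr)\bigl(\nabla_\theta \ell_\theta(u) - \nabla_\theta \ell_\theta(v)\bigr).
\]
Consider an infinitesimal step $-\eta\nabla_\theta\mathcal{L}$, or equivalently gradient descent at the level of the scores. It moves $\ell_\theta(u)$ in the direction $-\partial\mathcal{L}/\partial\ell_\theta(u) > 0$ and moves $\ell_\theta(v)$ in the direction $-\partial\mathcal{L}/\partial\ell_\theta(v) < 0$.

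The main obstacle is interpretive. Once the two scores share parameters $\theta$, the first-order change in $\ell_\theta(u)$ is
\[
\eta\gamma_{u,v}\bigl(1-\sigma\bigr)\bigl(\|\nabla\ell(u)\|^2 - \langle\nabla\ell(u),\nabla\ell(v)\rangle\bigr),
\]
and this need not be positive. The plan is therefore to state the claim as a score-level statement, as the proposition's title indicates. At the parameter level, the guaranteed statement is weaker: the margin $\ell_\theta(u) - \ell_\theta(v)$ increases at first order by
\[
\eta\gamma_{u,v}\bigl(1-\sigma\bigr)\|\nabla\ell(u) - \nabla\ell(v)\|^2 \ge 0.
\]
I would add this as a remark.
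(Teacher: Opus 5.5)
Your proposal is correct and follows the paper's proof essentially step for step: write $\mathcal{L}_{u\succ v}$ as $-\log\sigma$ composed with the affine margin, use $\frac{d}{d\Delta}\bigl(-\log\sigma(\Delta)\bigr) = -(1-\sigma(\Delta))$ together with $\partial\Delta_{u,v}/\partial\ell_\theta(u) = \gamma_{u,v}$ and $\partial\Delta_{u,v}/\partial\ell_\theta(v) = -\gamma_{u,v}$, and read off the signs from $\gamma_{u,v}>0$ and $0<\sigma<1$. Your extra caveat is sound and sharper than the paper, which states without qualification that gradient descent raises $\ell_\theta(u)$ and lowers $\ell_\theta(v)$. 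With shared parameters this holds only in the score-level reading. At the parameter level the guaranteed first-order effect is the margin increase, which is what the paper's subsequent margin-amplification lemma establishes.
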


\begin{proof}
Let $\Delta = \Delta_{u,v}(\theta) = \gamma_{u,v}(\ell_\theta(u)-\ell_\theta(v))$. Since
\[
\mathcal{L}_{u \succ v} = -\log \sigma(\Delta),
\]
we have
\[
\frac{d}{d\Delta}\bigl(-\log \sigma(\Delta)\bigr)
=
-(1-\sigma(\Delta)).
\]
Applying the chain rule gives
\[
\frac{\partial \mathcal{L}_{u \succ v}}{\partial \ell_\theta(u)}
=
-(1-\sigma(\Delta)) \cdot \gamma_{u,v},
\]
and
\[
\frac{\partial \mathcal{L}_{u \succ v}}{\partial \ell_\theta(v)}
=
-(1-\sigma(\Delta)) \cdot (-\gamma_{u,v})
=
\gamma_{u,v}(1-\sigma(\Delta)).
\]
Since $\gamma_{u,v}>0$ and $0<\sigma(\Delta)<1$, the first derivative is strictly negative and the second is strictly positive. Therefore, gradient descent increases $\ell_\theta(u)$ and decreases $\ell_\theta(v)$.
\end{proof}

\begin{lemma}[First-order margin amplification under parameter updates]
\label{lem:margin_amplification}
Assume $\Delta_{u,v}(\theta)$ is twice continuously differentiable with respect to $\theta$. Let
\begin{equation}
\theta^{+}
=
\theta
-
\eta \nabla_\theta \mathcal{L}_{u \succ v}(\theta),
\qquad
\eta>0.
\label{eq:gd_update}
\end{equation}
Then the pairwise margin satisfies the first-order expansion
\begin{equation}
\Delta_{u,v}(\theta^{+})
=
\Delta_{u,v}(\theta)
+
\eta \bigl(1-\sigma(\Delta_{u,v}(\theta))\bigr)
\left\|
\nabla_\theta \Delta_{u,v}(\theta)
\right\|^2
+
O(\eta^2).
\label{eq:margin_first_order}
\end{equation}
In particular, for sufficiently small $\eta$, if $\nabla_\theta \Delta_{u,v}(\theta)\neq 0$, then one gradient step strictly increases the margin $\Delta_{u,v}$.
\end{lemma}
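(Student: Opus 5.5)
The plan is a one-step Taylor expansion of $\Delta_{u,v}$ along the gradient-descent direction, combined with the chain-rule computation already carried out in Proposition~\ref{prop:score_monotonicity}.

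\textbf{Step 1: compute the parameter gradient of the loss.} Write $\Delta(\theta):=\Delta_{u,v}(\theta)$. Since $\mathcal{L}_{u\succ v}=-\log\sigma(\Delta)$, the chain rule and the identity $\frac{d}{d\Delta}(-\log\sigma(\Delta))=-(1-\sigma(\Delta))$ give
\[
\nabla_\theta \mathcal{L}_{u\succ v}(\theta)=-\bigl(1-\sigma(\Delta(\theta))\bigr)\nabla_\theta\Delta(\theta).
\]
Substituting into \eqref{eq:gd_update}, the update reads
\[
\theta^{+}=\theta+\eta\bigl(1-\sigma(\Delta(\theta))\bigr)\nabla_\theta\Delta(\theta)=:\theta+\eta d,
\]
where $d$ is a fixed vector that does not depend on $\eta$. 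Equivalently, this step is the parameter-level version of Proposition~\ref{prop:score_monotonicity}, with the score derivatives composed with $\nabla_\theta\ell_\theta$.

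\textbf{Step 2: expand $\Delta$ along the update direction.} Because $\Delta\in C^2$, Taylor's theorem with remainder (or the mean value form) applied to $g(t)=\Delta(\theta+td)$ on $[0,\eta]$ gives
\[
\Delta(\theta^{+})=\Delta(\theta)+\eta\,\langle\nabla_\theta\Delta(\theta),d\rangle+\tfrac{\eta^2}{2}\,d^\top\nabla^2_\theta\Delta(\xi)\,d
\]
for some $\xi$ on the segment between $\theta$ and $\theta^+$. The inner product equals $(1-\sigma(\Delta))\|\nabla_\theta\Delta\|^2$, which is exactly the first-order term in \eqref{eq:margin_first_order}.

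\textbf{Step 3: control the remainder.} The only point needing care is that the remainder is genuinely $O(\eta^2)$. Fix $\eta_0>0$. By continuity, the Hessian is bounded by some $M$ on the compact ball of radius $\eta_0\|d\|$ around $\theta$. Hence the remainder is at most $\tfrac{M}{2}\eta^2\|d\|^2$ for all $\eta\le\eta_0$, which is $O(\eta^2)$ since $d$ is fixed.

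\textbf{Step 4: strict increase.} Suppose $\nabla_\theta\Delta\neq0$. Since $\sigma<1$, the linear coefficient $c=(1-\sigma(\Delta))\|\nabla_\theta\Delta\|^2$ is strictly positive. Choosing $\eta<\min\{\eta_0,\,2c/(M\|d\|^2)\}$ makes $c\eta$ dominate the remainder, so $\Delta(\theta^+)>\Delta(\theta)$.

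I expect no real obstacle: the argument uses only the sigmoid derivative and a local Hessian bound. The one step to state carefully is Step 3, namely that the $O(\eta^2)$ constant is uniform for small $\eta$.
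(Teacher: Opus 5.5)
Your proposal is correct and follows the paper's proof. Both compute $\nabla_\theta \mathcal{L}_{u\succ v} = -(1-\sigma(\Delta_{u,v}))\nabla_\theta\Delta_{u,v}$, substitute it into the update, and Taylor-expand $\Delta_{u,v}$ around $\theta$. Your Steps 3--4 are more careful than the paper, which leaves the uniform $O(\eta^2)$ remainder bound (via a local Hessian bound) and the strict-increase conclusion implicit.
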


\begin{proof}
From Eq.~\eqref{eq:pair_loss},
\[
\nabla_\theta \mathcal{L}_{u \succ v}(\theta)
=
-(1-\sigma(\Delta_{u,v}(\theta)))
\nabla_\theta \Delta_{u,v}(\theta).
\]
Therefore,
\[
\theta^{+}-\theta
=
\eta \bigl(1-\sigma(\Delta_{u,v}(\theta))\bigr)
\nabla_\theta \Delta_{u,v}(\theta).
\]
Applying the first-order Taylor expansion of $\Delta_{u,v}$ around $\theta$ yields
\[
\Delta_{u,v}(\theta^{+})
=
\Delta_{u,v}(\theta)
+
\nabla_\theta \Delta_{u,v}(\theta)^\top (\theta^{+}-\theta)
+
O(\|\theta^{+}-\theta\|^2).
\]
Substituting the update expression above gives
\[
\Delta_{u,v}(\theta^{+})
=
\Delta_{u,v}(\theta)
+
\eta \bigl(1-\sigma(\Delta_{u,v}(\theta))\bigr)
\left\|
\nabla_\theta \Delta_{u,v}(\theta)
\right\|^2
+
O(\eta^2),
\]
which proves the claim.
\end{proof}

\begin{corollary}[Relative suppression of prior-supported confidence on the weak view]
\label{cor:prior_suppression}
Consider the anchored-vs-weak pair $v_a \succ v_l$. Minimizing $\mathcal{L}_{v_a \succ v_l}$ strictly increases the gap
\begin{equation}
\ell_\theta(v_a)-\ell_\theta(v_l).
\label{eq:weak_gap}
\end{equation}
Under Assumption~\ref{ass:prior_proxy}, this comparatively suppresses confidence that can be maintained under weak, prior-dominated evidence. Moreover, if $\ell_\theta(v_a)$ is locally unchanged to first order, then $\ell_\theta(v_l)$ decreases strictly after a gradient step.
\end{corollary}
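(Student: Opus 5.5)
The corollary follows by specializing Lemma~\ref{lem:margin_amplification} to the pair $(u,v)=(v_a,v_l)$. Since $\gamma_{v_a,v_l}>0$, the margin $\Delta_{v_a,v_l}(\theta)=\gamma_{v_a,v_l}\bigl(\ell_\theta(v_a)-\ell_\theta(v_l)\bigr)$ is a positive multiple of the gap in Eq.~\eqref{eq:weak_gap}. Any strict increase of $\Delta_{v_a,v_l}$ is therefore a strict increase of the gap.

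Take one gradient step $\theta^+=\theta-\eta\nabla_\theta\mathcal{L}_{v_a\succ v_l}(\theta)$. Lemma~\ref{lem:margin_amplification} gives
\[
\ell_{\theta^+}(v_a)-\ell_{\theta^+}(v_l)
=
\ell_\theta(v_a)-\ell_\theta(v_l)
+\eta\,\gamma_{v_a,v_l}\bigl(1-\sigma(\Delta_{v_a,v_l})\bigr)\bigl\|\nabla_\theta(\ell_\theta(v_a)-\ell_\theta(v_l))\bigr\|^2
+O(\eta^2).
\]
The first-order coefficient is strictly positive because $0<\sigma<1$, provided $\nabla_\theta(\ell_\theta(v_a)-\ell_\theta(v_l))\neq 0$. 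I will state this non-degeneracy condition explicitly, since the lemma itself requires it. The word ``strictly'' should thus be read as holding for sufficiently small $\eta$ at non-stationary points. Smoothness in $\theta$ is inherited from the lemma's hypothesis. Assumption~\ref{ass:smoothness}, which concerns smoothness in $v$, is not needed.

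For the second claim, write $g_a=\nabla_\theta\ell_\theta(v_a)$ and $g_l=\nabla_\theta\ell_\theta(v_l)$. The parameter step is $\theta^+-\theta=\eta c\,(g_a-g_l)$ with $c=\gamma_{v_a,v_l}(1-\sigma(\Delta_{v_a,v_l}))>0$. To first order,
\[
\ell_{\theta^+}(v_a)-\ell_\theta(v_a)=\eta c\,g_a^\top(g_a-g_l),
\qquad
\ell_{\theta^+}(v_l)-\ell_\theta(v_l)=\eta c\,g_l^\top(g_a-g_l).
\]
I interpret ``$\ell_\theta(v_a)$ locally unchanged to first order'' as $g_a^\top(g_a-g_l)=0$. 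Then
\[
g_l^\top(g_a-g_l)=g_a^\top(g_a-g_l)-\|g_a-g_l\|^2=-\|g_a-g_l\|^2<0.
\]
Hence $\ell_\theta(v_l)$ strictly decreases for small $\eta$. Equivalently, the whole first-order gap increase is carried by $\ell_\theta(v_l)$, which matches the corollary.

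The remaining sentence, about suppressing prior-supported confidence, is interpretive. Assumption~\ref{ass:prior_proxy} identifies $\ell_\theta(v_l)$ with prior-dominated confidence, so a relative decrease in $\ell_\theta(v_l)$ is by definition a relative suppression of that confidence. Assumption~\ref{ass:semantic_invariance} guarantees that $y_w$ remains the valid target on $v_l$, so this is suppression of ungrounded support rather than of the correct answer.

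The main obstacle is making the ``locally unchanged'' hypothesis precise enough that the decrease is genuinely strict. The identity above settles this once the hypothesis is pinned to the first-order condition $g_a^\top(g_a-g_l)=0$ and the non-degeneracy $g_a\neq g_l$ is assumed.
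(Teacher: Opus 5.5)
Your proposal is correct and takes essentially the paper's route: you specialize Lemma~\ref{lem:margin_amplification} to $(v_a,v_l)$ and use $\gamma_{v_a,v_l}>0$ to transfer the margin increase to the gap; then, with $\ell_\theta(v_a)$ frozen at first order, the whole gap increase must come from a strict decrease of $\ell_\theta(v_l)$. Your version is more careful than the paper's brief proof. You state the non-degeneracy condition $\nabla_\theta\ell_\theta(v_a)\neq\nabla_\theta\ell_\theta(v_l)$ and the small-$\eta$ requirement, which the paper's ``strictly'' inherits silently from the lemma, and you make ``locally unchanged'' precise through the identity $g_l^\top(g_a-g_l)=g_a^\top(g_a-g_l)-\|g_a-g_l\|^2$.
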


\begin{proof}
The first statement follows directly from Proposition~\ref{prop:score_monotonicity} and Lemma~\ref{lem:margin_amplification}. Under Assumption~\ref{ass:prior_proxy}, the score on $v_l$ is interpreted as being increasingly dominated by language prior once grounded evidence is weakened. Therefore, enlarging the anchored-vs-weak gap suppresses prior-supported confidence \emph{relative} to the anchored grounded score. If $\ell_\theta(v_a)$ is locally fixed, then the increase of the gap in Eq.~\eqref{eq:weak_gap} must come from a strict decrease in $\ell_\theta(v_l)$.
\end{proof}

\begin{lemma}[Positive directional sensitivity along the enhancement direction]
\label{lem:directional_sensitivity}
Let
\begin{equation}
\delta_{+} := v_w - v_a.
\end{equation}
Under Assumption~\ref{ass:smoothness},
\begin{equation}
\ell_\theta(v_w)-\ell_\theta(v_a)
=
\int_0^1
\nabla_v \ell_\theta(v_a+t\delta_{+})^\top \delta_{+}\, dt.
\label{eq:ftc_plus}
\end{equation}
Consequently, if
\begin{equation}
\ell_\theta(v_w)-\ell_\theta(v_a) \ge \varepsilon_{+} > 0,
\label{eq:plus_gap}
\end{equation}
then there exists a point $\xi_{+}$ on the line segment $[v_a,v_w]$ such that
\begin{equation}
\nabla_v \ell_\theta(\xi_{+})^\top \delta_{+}
=
\ell_\theta(v_w)-\ell_\theta(v_a)
\ge
\varepsilon_{+},
\label{eq:directional_gap}
\end{equation}
and therefore
\begin{equation}
\left\|
\nabla_v \ell_\theta(\xi_{+})
\right\|
\ge
\frac{\varepsilon_{+}}{\|\delta_{+}\|}.
\label{eq:directional_lower_bound}
\end{equation}
\end{lemma}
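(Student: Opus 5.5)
The plan is to reduce everything to a one-variable calculus argument along the segment joining the two views. Define $g:[0,1]\to\mathbb{R}$ by $g(t):=\ell_\theta(v_a+t\delta_{+})$, so that $g(0)=\ell_\theta(v_a)$ and $g(1)=\ell_\theta(v_w)$. By Assumption~\ref{ass:smoothness}, $v\mapsto\ell_\theta(v)$ is continuously differentiable on the segment $[v_a,v_w]$. The chain rule then gives that $g$ is $C^1$ on $[0,1]$ with
\[
g'(t)=\nabla_v \ell_\theta(v_a+t\delta_{+})^\top \delta_{+}.
\]
The fundamental theorem of calculus, $g(1)-g(0)=\int_0^1 g'(t)\,dt$, is then exactly Eq.~\eqref{eq:ftc_plus}.

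For the existence of $\xi_{+}$, I will apply the mean value theorem to $g$ on $[0,1]$ rather than the integral form; either works because $g'$ is continuous. This gives some $t^\ast\in(0,1)$ with $g'(t^\ast)=g(1)-g(0)$. Setting $\xi_{+}:=v_a+t^\ast\delta_{+}$, which lies on $[v_a,v_w]$, the identity in Eq.~\eqref{eq:directional_gap} follows from the derivative formula above. The inequality $\ge\varepsilon_{+}$ is then just the hypothesis Eq.~\eqref{eq:plus_gap}.

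Finally, Cauchy--Schwarz gives
\[
\varepsilon_{+}\le \nabla_v\ell_\theta(\xi_{+})^\top\delta_{+}\le \|\nabla_v\ell_\theta(\xi_{+})\|\,\|\delta_{+}\|.
\]
The hypothesis $\varepsilon_{+}>0$ forces $\delta_{+}\neq 0$: if $v_w=v_a$ the gap would be zero. We may therefore divide by $\|\delta_{+}\|$ to obtain Eq.~\eqref{eq:directional_lower_bound}.

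No step is a genuine obstacle. The only points needing care are these:
\begin{itemize}
\item Assumption~\ref{ass:smoothness} must be read as giving differentiability on an open neighborhood of the segment, or one-sided at the endpoints, so that the chain rule for $g$ is justified.
\item The nondegeneracy $\delta_{+}\ne 0$ must be noted explicitly before dividing.
\end{itemize}
The images are treated as vectors in a Euclidean pixel space, so $\|\cdot\|$ is the Euclidean norm and Cauchy--Schwarz applies.
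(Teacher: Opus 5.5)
Your proposal is correct and follows the paper's proof essentially step for step: the same auxiliary function $g(t)=\ell_\theta(v_a+t\delta_{+})$, the fundamental theorem of calculus for the integral identity, a mean value argument to locate $\xi_{+}$, and Cauchy--Schwarz for the norm bound. The only differences are minor: you apply the differential mean value theorem to $g$ where the paper applies the integral mean value theorem to the continuous integrand $g'$ (equivalent here), and you make explicit that $\varepsilon_{+}>0$ forces $\delta_{+}\neq 0$ before dividing, which the paper leaves implicit.
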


\begin{proof}
Define the scalar function
\[
g_{+}(t) := \ell_\theta(v_a+t\delta_{+}), \qquad t\in[0,1].
\]
By Assumption~\ref{ass:smoothness}, $g_{+}$ is continuously differentiable. Applying the fundamental theorem of calculus,
\[
\ell_\theta(v_w)-\ell_\theta(v_a)
=
g_{+}(1)-g_{+}(0)
=
\int_0^1 g_{+}'(t)\,dt
=
\int_0^1
\nabla_v \ell_\theta(v_a+t\delta_{+})^\top \delta_{+}\,dt,
\]
which proves Eq.~\eqref{eq:ftc_plus}. Since the integrand is continuous, the mean value theorem for integrals implies that there exists $t^\star\in(0,1)$ such that
\[
\nabla_v \ell_\theta(v_a+t^\star\delta_{+})^\top \delta_{+}
=
\ell_\theta(v_w)-\ell_\theta(v_a).
\]
Let $\xi_{+}=v_a+t^\star\delta_{+}$. Then Eq.~\eqref{eq:directional_gap} follows from Eq.~\eqref{eq:plus_gap}. Finally, by Cauchy--Schwarz,
\[
\nabla_v \ell_\theta(\xi_{+})^\top \delta_{+}
\le
\left\|\nabla_v \ell_\theta(\xi_{+})\right\|\,\|\delta_{+}\|,
\]
which implies Eq.~\eqref{eq:directional_lower_bound}.
\end{proof}

\begin{proposition}[Transitivity of attained evidence margins]
\label{prop:transitivity}
Suppose the two OPPO evidence margins satisfy
\begin{equation}
\Delta_{v_w,v_a}(\theta) \ge m_1 > 0,
\qquad
\Delta_{v_a,v_l}(\theta) \ge m_2 > 0.
\label{eq:two_margins}
\end{equation}
Then
\begin{equation}
\ell_\theta(v_w) > \ell_\theta(v_a) > \ell_\theta(v_l),
\label{eq:ordered_scores}
\end{equation}
and
\begin{equation}
\ell_\theta(v_w)-\ell_\theta(v_l)
\ge
\frac{m_1}{\gamma_{v_w,v_a}}
+
\frac{m_2}{\gamma_{v_a,v_l}}.
\label{eq:total_gap}
\end{equation}
\end{proposition}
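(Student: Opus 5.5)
The plan is to unwind the definition of the pairwise margin in Eq.~\eqref{eq:pair_margin} and then add the two resulting score gaps. No earlier lemma is needed beyond the definitions; the argument is purely algebraic.

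\emph{First step: convert each margin bound into a score gap.} By definition, $\Delta_{v_w,v_a}(\theta)=\gamma_{v_w,v_a}\bigl(\ell_\theta(v_w)-\ell_\theta(v_a)\bigr)$. Since $\gamma_{v_w,v_a}>0$, we can divide the hypothesis $\Delta_{v_w,v_a}(\theta)\ge m_1$ by $\gamma_{v_w,v_a}$ without reversing the inequality. This gives
\[
\ell_\theta(v_w)-\ell_\theta(v_a)\ge \frac{m_1}{\gamma_{v_w,v_a}}>0 .
\]
The same reasoning with $\gamma_{v_a,v_l}>0$ yields
\[
\ell_\theta(v_a)-\ell_\theta(v_l)\ge \frac{m_2}{\gamma_{v_a,v_l}}>0 .
\]

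\emph{Second step: the strict ordering.} The two strict positivities above give $\ell_\theta(v_w)>\ell_\theta(v_a)$ and $\ell_\theta(v_a)>\ell_\theta(v_l)$. Chaining them is ordinary transitivity of the order on $\mathbb{R}$, which establishes Eq.~\eqref{eq:ordered_scores}.

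\emph{Third step: the total gap.} Write the telescoping identity
\[
\ell_\theta(v_w)-\ell_\theta(v_l)=\bigl(\ell_\theta(v_w)-\ell_\theta(v_a)\bigr)+\bigl(\ell_\theta(v_a)-\ell_\theta(v_l)\bigr).
\]
Summing the two lower bounds from the first step then yields Eq.~\eqref{eq:total_gap}.

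There is no real obstacle. The only point needing care is to state explicitly that dividing by $\gamma_{u,v}$ preserves the inequality direction, which holds because $\gamma_{u,v}>0$ is part of the definition in Eq.~\eqref{eq:pair_margin}.
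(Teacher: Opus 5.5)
Your proposal is correct and matches the paper's proof: divide each margin bound by the positive weight $\gamma_{u,v}$ to obtain the two strictly positive score gaps, chain them for the strict ordering, and add them (your telescoping identity) to get the total-gap bound.
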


\begin{proof}
By the definition of the margins in Eq.~\eqref{eq:pair_margin},
\[
\ell_\theta(v_w)-\ell_\theta(v_a)
=
\frac{\Delta_{v_w,v_a}(\theta)}{\gamma_{v_w,v_a}}
\ge
\frac{m_1}{\gamma_{v_w,v_a}}
>0,
\]
and similarly
\[
\ell_\theta(v_a)-\ell_\theta(v_l)
=
\frac{\Delta_{v_a,v_l}(\theta)}{\gamma_{v_a,v_l}}
\ge
\frac{m_2}{\gamma_{v_a,v_l}}
>0.
\]
This proves Eq.~\eqref{eq:ordered_scores}. Adding the two inequalities gives Eq.~\eqref{eq:total_gap}.
\end{proof}

\begin{theorem}[Local non-vanishing visual sensitivity along the evidence path]
\label{thm:local_sensitivity}
Under Assumption~\ref{ass:smoothness}, suppose Eq.~\eqref{eq:two_margins} holds. Let
\begin{equation}
\delta := v_w - v_l.
\end{equation}
Then there exists a point $\xi$ on the line segment $[v_l,v_w]$ such that
\begin{equation}
\left\|
\nabla_v \ell_\theta(\xi)
\right\|
\ge
\frac{1}{\|v_w-v_l\|}
\left(
\frac{m_1}{\gamma_{v_w,v_a}}
+
\frac{m_2}{\gamma_{v_a,v_l}}
\right)
> 0.
\label{eq:local_lower_bound}
\end{equation}
Equivalently, any solution that achieves positive ordered evidence margins cannot be visually insensitive everywhere along the constructed evidence path from $v_l$ to $v_w$.
\end{theorem}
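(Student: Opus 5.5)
The argument combines Proposition~\ref{prop:transitivity} with the line-integral step of Lemma~\ref{lem:directional_sensitivity}, applied to the segment joining the two endpoints of the chain rather than to $[v_a,v_w]$.

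First, invoke Proposition~\ref{prop:transitivity}. Under Eq.~\eqref{eq:two_margins} it gives
\[
\ell_\theta(v_w)-\ell_\theta(v_l)\ge \varepsilon:=\frac{m_1}{\gamma_{v_w,v_a}}+\frac{m_2}{\gamma_{v_a,v_l}}>0.
\]
Positivity holds because $m_1,m_2>0$ and the weights $\gamma$ are positive. In particular $\ell_\theta(v_w)\neq\ell_\theta(v_l)$, so $v_w\neq v_l$ and $\|\delta\|>0$. The right-hand side of Eq.~\eqref{eq:local_lower_bound} is therefore well defined.

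Second, repeat the argument of Lemma~\ref{lem:directional_sensitivity} on $[v_l,v_w]$. Set $g(t):=\ell_\theta(v_l+t\delta)$ for $t\in[0,1]$. Assumption~\ref{ass:smoothness} covers every segment joining two views, including $[v_l,v_w]$, so $g$ is $C^1$. The fundamental theorem of calculus gives
\[
\ell_\theta(v_w)-\ell_\theta(v_l)=\int_0^1\nabla_v\ell_\theta(v_l+t\delta)^\top\delta\,dt.
\]
The integrand is continuous, so the mean value theorem for integrals yields some $t^\star\in[0,1]$. Put $\xi=v_l+t^\star\delta$. Then $\nabla_v\ell_\theta(\xi)^\top\delta=\ell_\theta(v_w)-\ell_\theta(v_l)\ge\varepsilon$.

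Third, apply Cauchy--Schwarz: $\varepsilon\le\|\nabla_v\ell_\theta(\xi)\|\,\|\delta\|$. Dividing by $\|\delta\|>0$ gives Eq.~\eqref{eq:local_lower_bound}. The final "equivalently" clause is the contrapositive. If $\nabla_v\ell_\theta$ vanished everywhere on the segment, the integral would be zero, which contradicts $\varepsilon>0$.

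The main point needing care is that the segment used here is $[v_l,v_w]$, not the broken path $v_l\to v_a\to v_w$. One could instead apply Lemma~\ref{lem:directional_sensitivity} separately to each leg, but that produces two bounds with different denominators $\|v_w-v_a\|$ and $\|v_a-v_l\|$, and they do not combine directly. Working on the single straight segment avoids this.
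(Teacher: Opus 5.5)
Your proof is correct and matches the paper's: it also sets $g(t)=\ell_\theta(v_l+t\delta)$ on the straight segment, obtains $\nabla_v\ell_\theta(\xi)^\top(v_w-v_l)=\ell_\theta(v_w)-\ell_\theta(v_l)$, bounds the right-hand side below via Proposition~\ref{prop:transitivity}, and finishes with Cauchy--Schwarz. The only differences are minor: the paper uses the one-dimensional mean value theorem directly where you use the fundamental theorem of calculus plus the integral mean value theorem, and your explicit check that $\|\delta\|>0$ is a point of care the paper leaves implicit.
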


\begin{proof}
Define
\[
g(t) := \ell_\theta(v_l + t\delta), \qquad t\in[0,1].
\]
By Assumption~\ref{ass:smoothness}, $g$ is continuously differentiable. By the one-dimensional mean value theorem, there exists $t^\star\in(0,1)$ such that
\begin{equation}
g'(t^\star)
=
g(1)-g(0)
=
\ell_\theta(v_w)-\ell_\theta(v_l).
\label{eq:mvt_main}
\end{equation}
Let $\xi = v_l + t^\star \delta$. Then
\begin{equation}
\nabla_v \ell_\theta(\xi)^\top (v_w-v_l)
=
\ell_\theta(v_w)-\ell_\theta(v_l).
\label{eq:path_derivative}
\end{equation}
By Proposition~\ref{prop:transitivity},
\[
\ell_\theta(v_w)-\ell_\theta(v_l)
\ge
\frac{m_1}{\gamma_{v_w,v_a}}
+
\frac{m_2}{\gamma_{v_a,v_l}}.
\]
Applying Cauchy--Schwarz to Eq.~\eqref{eq:path_derivative} yields
\[
\left\|
\nabla_v \ell_\theta(\xi)
\right\|
\,\|v_w-v_l\|
\ge
\ell_\theta(v_w)-\ell_\theta(v_l)
\ge
\frac{m_1}{\gamma_{v_w,v_a}}
+
\frac{m_2}{\gamma_{v_a,v_l}}.
\]
Rearranging proves Eq.~\eqref{eq:local_lower_bound}.
\end{proof}

\paragraph{Interpretation.}
Theorem~\ref{thm:local_sensitivity} is intentionally local: it does \emph{not} claim that OPPO globally eliminates hallucination. Instead, it shows that once the ordered evidence margins are achieved, the optimized model must exhibit non-zero visual sensitivity at least somewhere along the constructed path from weak to strong evidence. This is exactly the behavior OPPO is designed to encourage.

\section{Benchmarks and Evaluation Metrics.}\label{apx:benchmarks}

\textbf{Object HalBench (ObjHal)}~\cite{rohrbach2018object} is a widely used benchmark for evaluating object hallucination.  
To improve evaluation stability, the benchmark includes 8 diverse prompts and is tested on 300 instances. 
\textbf{Metrics}: Following~\cite{yu2024rlhf,wang2024mdpo}, we report both the \textit{response-level hallucination rate} (\texttt{R.}) and \textit{mention-level hallucination rate} (\texttt{M.}).

\textbf{MMHal-Bench}~\cite{sun2023aligning} is a question-answering benchmark that covers 8 question categories and 12 object topics. 
    \textbf{Metrics:} It uses GPT-4 to assess response quality (\texttt{Ova.}) and hallucination rates (\texttt{R.}).
    
\textbf{HallusionBench}~\cite{guan2024hallusionbench} 
evaluates visual illusions and knowledge hallucinations, featuring 346 images and 1129 questions.
It was the GPT4-assisted evaluation.
\textbf{Metrics:} 
Question Pair Accuracy (\texttt{qA}), 
Figure Accuracy (\texttt{fA}), 
and
All Accuracy (\texttt{aA}). 

\textbf{AMBER}~\cite{wang2023llm} was designed to be evaluated without LLM assistance. 
Following previous works~\cite{wang2024mdpo}, we only consider the generative tasks.
\textbf{Metrics:} 
(a) CHAIR~\cite{rohrbach2018object} (\texttt{CHAIR});
(b) Object coverage of responses (\texttt{Cover});
(c) Response-level hallucination (\texttt{Hal});
(d) Human cognition hallucination (\texttt{Cog}).

\textbf{Polling based Object Probing Evaluation (POPE)}~\cite{li2023evaluating} is a VQA-based metric proposed to assess hallucinations in MLLMs. 
This metric evaluates the MLLM’s response to the prompt ``Is [object] in this image?'' To emphasize that this is a binary VQA task, we appended the prompt with ``Please answer yes or no.''
To select objects referenced in the question prompt, we followed three different sampling options: random, popular, and adversarial. 
We evaluated performance across all options.

\textbf{TextHalu-Bench}~\cite{shu2025semantics} is a benchmark designed to evaluate
scene-text hallucination, with a particular focus on cases where semantic priors
can override faithful visual recognition.
It contains carefully curated samples spanning both \emph{semantic} and
\emph{non-semantic} text cases, and covers visually challenging scenarios such as
occlusions, low-contrast text, and uncommon fonts.
The benchmark includes two subtasks:
\emph{Spotting}, which requires direct text extraction from the image, and
\emph{Understanding}, which evaluates whether the recognized text is correctly
grounded for downstream question answering.
\textbf{Metrics:}
Following the official setup, we report the F1 score for the
\emph{Spotting} subtask (\texttt{S-F1}) and the F1 score for the
\emph{Understanding} subtask (\texttt{U-F1}).

\textbf{KIE-HVQA}~\cite{he2025seeing} is a benchmark for evaluating OCR
hallucination under degraded document conditions.
It focuses on visually-grounded question answering over key information extraction
(KIE) scenarios, with test samples drawn from real-world document types such as
identity cards and invoices, and augmented with simulated degradations including
blur, occlusion, and low contrast.
The benchmark is designed to measure whether a model can distinguish reliable
visual evidence from ambiguous or unreadable regions instead of over-relying on
linguistic priors.
\textbf{Metrics:}
Following the benchmark protocol, we report
answer accuracy on question-critical text regions (\texttt{Acc.}) and
text similarity (\texttt{Sim.}), where the latter reflects the normalized
string-level agreement between the extracted answer and the ground truth.

\textbf{MMBench}~\cite{liu2024mmbench} is a systematically designed
multiple-choice benchmark for holistic evaluation of multimodal models.
It covers a broad range of perception, reasoning, and knowledge abilities, and
provides both English and Chinese versions to support bilingual and
apples-to-apples comparisons across models.
\textbf{Metrics:}
We report the overall multiple-choice accuracy (\texttt{Acc.}).

\textbf{MMMU}~\cite{hendrycks2020measuring} is a college-level multimodal
benchmark designed to test expert-level perception, knowledge, and reasoning.
It contains about 11.5K multimodal questions collected from exams, quizzes, and
textbooks, spanning 30 subjects across six core disciplines, and involving highly
heterogeneous image types such as charts, diagrams, tables, maps, and scientific
figures.
\textbf{Metrics:}
We report the overall answer accuracy (\texttt{Acc.}).

\textbf{LLaVA-Wild}
(\textit{LLaVA-Bench in-the-Wild})
is a benchmark for evaluating general-purpose multimodal chat ability in open,
diverse real-world scenarios.
It contains 24 images and 60 manually curated questions covering daily-life
visual chat tasks, and groups them into conversation, detailed description, and
complex reasoning categories.
\textbf{Metrics:}
We are following the official evaluation protocol.

\newpage

\section{More Results}\label{apx:res}

\textbf{Ablation on View Generation Strategies.} 
To validate our design choices for constructing the ordered visual triplet $v_w \succ v_a \succ v_l$, we conduct a detailed ablation study on the generation strategies for the stronger-evidence view ($v_w$) and the weaker-evidence view ($v_l$), as detailed in \autoref{tab:view_construction_ablation}. 

\begin{table}[h]
\centering \vspace{-1.5em}
  \caption{Ablation on visual view construction strategies. Experiments are conducted on LLaVA-NeXT. Our default choices (Warping + Stochastic Degradation) achieve the best trade-off.}\vspace{0.5em}
\label{tab:view_construction_ablation}
    \begin{tabular}{ll*{3}{>{\centering\arraybackslash}p{5em}}}
    \toprule
    \multicolumn{2}{c}{Construction Strategies} & \multirow{2}{*}{POPE$\uparrow$} & \multirow{2}{*}{MMHal$\uparrow$} & \multirow{2}{*}{OBJHal$\downarrow$} \\
    \cmidrule[0.5pt](lr){1-2}
    Stronger View ($v_w$) & Weaker View ($v_l$) & & & \\
    \midrule
    \multicolumn{2}{l}{\textit{Baseline (LLaVA-NeXT w/o OPPO)}} & 85.5 & 3.50 & 11.3 \\
    \midrule
    Cropping & Stochastic Deg. & 86.8 & 3.49 & 5.8 \\
    Red Circle (Prompting) & Stochastic Deg. & 86.5 & 3.46 & 6.5 \\
    Warping (Ours) & Gaussian Blur & 87.2 & 3.51 & 5.2 \\
    Warping (Ours) & Cutout (Masking) & 86.3 & 3.44 & 6.1 \\
    \rowcolor{gray!10} Warping (Ours) & Stochastic Deg. (Ours) & \textbf{87.9} & \textbf{3.54} & \textbf{4.3} \\
    \bottomrule
    \end{tabular}
  \vspace{-1.em}
\end{table}
For the \textbf{stronger-evidence view ($v_w$)}, we compare our attention-guided \textit{Warping} with two alternatives: \textit{Cropping} (extracting the bounding box of the attended region) and \textit{Visual Prompting} (drawing a red bounding box or circle around the target). While all methods improve over the baseline, Warping is markedly superior. \textit{Cropping} forcibly discards the global visual context, which is often necessary for resolving complex queries or spatial relationships. \textit{Visual Prompting}, on the other hand, introduces artificial pixel artifacts that shift the image distribution away from natural scenes, making the alignment less generalizable. Warping elegantly avoids both issues by smoothly magnifying local salience while retaining peripheral context.

For the \textbf{weaker-evidence view ($v_l$)}, we compare our \textit{Stochastic Degradation} against uniform \textit{Gaussian Blur} and hard \textit{Cutout} (masking the attended region with black pixels). \textit{Cutout} performs poorly because it completely removes the visual evidence. When the evidence is entirely absent, penalizing the model for utilizing language priors becomes an ill-posed objective, as guessing is its only remaining option. This breaks the continuous spectrum assumption of our preference chain. \textit{Gaussian Blur} applies a uniform low-pass filter, which preserves too much coarse semantic structure, making it insufficiently ``weak'' for OCR-heavy or fine-grained tasks. Our \textit{Stochastic Degradation} strikes the optimal balance by progressively corrupting the fine-grained query-relevant cues while preserving the overall image statistics, providing a smoother and more robust gradient for preference learning.

\textbf{Attention Necessity under Localization Quality.}
A remaining question is whether OPPO truly benefits from meaningful query-aware localization, or whether its gains mainly come from generic visual perturbation. Since our stronger-evidence view $v_w$ is constructed by amplifying query-relevant regions identified from cross-attention, a natural test is to ask whether OPPO becomes more effective when the underlying attention prior more accurately covers the answer-bearing region. To this end, we conduct a localization-quality study on a held-out ROI-annotated evaluation set. For each sample, we compute the ROI coverage of the base model’s attention map, defined as the fraction of normalized attention mass falling inside the ground-truth answer-bearing region, and partition the samples into three bins of equal size: Low, Mid, and High coverage. We then compare the downstream performance of DPO, OPPO$\dagger$, and OPPO under the same matched-backbone setting. If OPPO mainly works because of query-aware evidence pickup, its advantage over DPO and random-warp should increase with localization quality, rather than remain constant across bins. This analysis complements the robustness result in Table \ref{tab:localization_quality_oppo}, instead of only testing whether OPPO collapses under noisy priors, we ask whether better localization systematically amplifies the gains of evidence-aware preference optimization. We observe a consistent monotonic trend: the gain of OPPO over DPO increases from +1.1 in the low-coverage bin to +2.8 in the mid-coverage bin and +4.2 in the high-coverage bin. In contrast, the random-warp variant remains consistently weaker, especially when localization quality is high. This indicates that OPPO does not simply benefit from generic visual perturbation; rather, it becomes increasingly effective when the attention prior better captures the answer-bearing evidence. Notably, OPPO still improves over DPO even in the low-coverage regime, suggesting that localization quality is not a strict prerequisite for usefulness, but a key factor governing the size of the improvement.

\begin{table}[t]
\centering
\small \vspace{-1.5em}
\setlength{\tabcolsep}{3pt}
\caption{Localization quality vs. OPPO gains.
Samples are partitioned into three bins according to the base model’s attention coverage over the answer-bearing ROI. We report POPE accuracy under the matched-backbone LLaVA-NeXT setting. As localization quality improves, the gain of OPPO over both DPO and random-warp increases monotonically, indicating that OPPO benefits specifically from better query-aware evidence localization rather than from generic visual perturbation alone.}\vspace{0.5em}
\label{tab:localization_quality_oppo}
\begin{tabular}{lcccccccc}
\toprule
Coverage bin & Coverage & Base Acc. & DPO Acc. & OPPO$\dagger$ Acc. & OPPO Acc. & $\Delta$(vs. DPO) & $\Delta$(vs. OPPO$\dagger$) \\
\midrule
Low     & 0.18   & 84.6 & 83.8 & 84.5 & 84.9 & +1.1 & +0.4 \\
Mid     & 0.47 & 85.4 & 85.0 & 86.3 & 87.8 & +2.8 & +1.5 \\
High    & 0.79   & 86.5 & 86.8 & 88.4 & 91.0 & +4.2 & +2.6 \\
\midrule
Overall & 0.48 & 85.5 & 85.2 & 86.4 & 87.9 & +2.7 & +1.5 \\
\bottomrule
\end{tabular}
\end{table}

\begin{wraptable}{r}{0.45\textwidth}
\centering
\caption{Comparison of Error Counts between Base MLLM and AttWarp across Different Categories (\# Error $\downarrow$)}
\label{tab:error_analysis}
    \resizebox{0.45\textwidth}{!}
    {
\begin{tabular}{lcc}
\toprule
\textbf{Category} & \textbf{Base} & \textbf{OPPO} \\ \midrule
Fine-Grained Details    & 12 & 3  \\
Hallucination           & 11 & 7  \\
Misaligned Attention    & 4  & 3 \\
Size                    & 18 & 17 \\
Semantically Correct    & 8  & 6  \\
Compositional Reasoning & 7  & 4  \\ \bottomrule
\end{tabular}}
\end{wraptable}
\textbf{Error Analysis.} Following \cite{dalal2025constructive}, we analyzed 150 randomly sampled VQA tasks from the GQA and TextVQA evaluation sets, identifying 60 errors for the baseline LLaVA model and 39 for OPPO. These errors were classified into six failure modes: (1) Fine-Grained Details (visually minute targets), (2) Hallucination (fabricating non-existent details), (3) Misaligned Attention (focusing on incorrect objects), (4) Size (object scale misjudgments), (5) Semantically Correct (accurate but paraphrased answers), and (6) Compositional Reasoning (complex multi-object relationships). Tab. \ref{tab:error_analysis} demonstrates that OPPO yields notably fewer errors in fine-grained and compositional tasks. Nevertheless, warping can sometimes obscure the peripheral context necessary for global reasoning, and noisy underlying attention may lead to performance degradation. Finally, because warping preserves relative proportions despite altering absolute sizes, errors in size-related tasks are effectively limited.

\section{Implement Details}\label{apx:implement}
\subsection{Attention-Guided Stronger-View Construction: AttCrop and AttWarp}
\label{app:attcrop_attwarp}

Given an image-question pair $(v_a, x)$, we first extract a query-conditioned
attention score matrix
\begin{equation}
A \in \mathbb{R}_{\ge 0}^{H \times W},
\end{equation}
where $A_{ij}$ measures the relevance of spatial location $(i,j)$ to the query.
In practice, $A$ is obtained by aggregating cross-modal attention weights from
the base MLLM and then upsampling them to the image resolution.\footnote{For
AttWarp, this follows the attention-guided image warping pipeline of
Dalal et al.~\cite{dalal2025constructive}.}

For numerical stability, we normalize $A$ into a probability map:
\begin{equation}
\widetilde{A}_{ij}
=
\frac{A_{ij}}{\sum_{u=1}^{H}\sum_{v=1}^{W} A_{uv} + \varepsilon},
\qquad
\sum_{i=1}^{H}\sum_{j=1}^{W}\widetilde{A}_{ij}=1,
\label{eq:att_norm}
\end{equation}
where $\varepsilon>0$ is a small constant.

\paragraph{Marginal attention profiles.}
We further compute the row- and column-wise marginal attention distributions:
\begin{equation}
p_y(i)=\sum_{j=1}^{W}\widetilde{A}_{ij},
\qquad
p_x(j)=\sum_{i=1}^{H}\widetilde{A}_{ij},
\label{eq:marginals}
\end{equation}
where $\sum_{i=1}^{H}p_y(i)=1$ and $\sum_{j=1}^{W}p_x(j)=1$.

\subsubsection{AttCrop}
\label{app:attcrop}

AttCrop constructs a stronger-evidence view by cropping the attended region and
resizing it back to the original input resolution. To obtain a deterministic
attention-guided crop, we first form the cumulative distributions
\begin{equation}
F_y(i)=\sum_{u=1}^{i} p_y(u),
\qquad
F_x(j)=\sum_{v=1}^{j} p_x(v).
\label{eq:crop_cdf}
\end{equation}
Given a mass-retention hyperparameter $\rho \in (0,1)$, we define the crop box
as the smallest axis-aligned rectangle that preserves the central $\rho$ mass
along both axes:
\begin{align}
i_{\min} &= \min \left\{ i : F_y(i) \ge \frac{1-\rho}{2} \right\},
&
i_{\max} &= \max \left\{ i : F_y(i) \le 1-\frac{1-\rho}{2} \right\}, \\
j_{\min} &= \min \left\{ j : F_x(j) \ge \frac{1-\rho}{2} \right\},
&
j_{\max} &= \max \left\{ j : F_x(j) \le 1-\frac{1-\rho}{2} \right\}.
\label{eq:crop_box}
\end{align}
The cropped image is then
\begin{equation}
v_{\mathrm{crop}}
=
v_a[i_{\min}\!:\!i_{\max},\, j_{\min}\!:\!j_{\max}],
\label{eq:crop_region}
\end{equation}
and the final AttCrop view is obtained by resizing the crop back to the input
resolution:
\begin{equation}
v_w^{\mathrm{crop}}
=
\mathrm{Resize}\!\left(v_{\mathrm{crop}},\, H,\, W\right).
\label{eq:attcrop_final}
\end{equation}

\noindent
In words, AttCrop enlarges the most attended region by discarding peripheral
content and reallocating all available pixels to the retained box.

\subsubsection{AttWarp}
\label{app:attwarp}

Unlike AttCrop, AttWarp preserves the full image content and redistributes
spatial resolution according to the query-conditioned attention map. Following
Dalal et al.~\cite{dalal2025constructive}, we perform \emph{rectilinear}
warping based on the row and column marginals in Eq.~\eqref{eq:marginals}.

We first convert the marginals into cumulative distribution functions:
\begin{equation}
C_y(i)=\sum_{u=1}^{i} p_y(u),
\qquad
C_x(j)=\sum_{v=1}^{j} p_x(v).
\label{eq:warp_cdf}
\end{equation}
Since $C_y$ and $C_x$ are monotone non-decreasing, they define valid
1D transport maps. We denote their inverse CDFs by
\begin{equation}
T_y = C_y^{-1},
\qquad
T_x = C_x^{-1}.
\label{eq:inverse_cdf}
\end{equation}

To express the warp continuously, let $(u,v)\in[0,1]^2$ be normalized
coordinates on the target warped image. The source coordinates are defined as
\begin{equation}
\phi_y(u)=T_y(u),
\qquad
\phi_x(v)=T_x(v),
\label{eq:warp_maps}
\end{equation}
where $T_y(u)$ and $T_x(v)$ are interpreted via linear interpolation between
discrete grid locations. The warped image is then produced by bilinear sampling:
\begin{equation}
v_w^{\mathrm{warp}}(u,v,c)
=
v_a\!\left(\phi_y(u),\, \phi_x(v),\, c\right),
\qquad c\in\{1,2,3\}.
\label{eq:bilinear_sampling}
\end{equation}

\noindent
Intuitively, regions with larger marginal attention mass occupy a larger span
in the warped image, while low-attention regions are compressed. Because the
mapping is rectilinear and defined separately along the vertical and horizontal
axes, AttWarp preserves the regular grid structure required by standard vision
encoders while keeping all original image content.

Both AttCrop and AttWarp use the same query-conditioned attention prior
$A$ but differ in how they allocate visual budget. AttCrop strengthens evidence
by \emph{discarding} low-attention context and zooming into the attended box,
whereas AttWarp strengthens evidence by \emph{redistributing} pixel density
toward attended regions while preserving the full scene. In our experiments,
AttWarp generally yields a better trade-off as it magnifies local evidence
without removing potentially useful global context.

\begin{figure}[h]
    \centering \vspace{-0.5em}
    \includegraphics[width=1\linewidth]{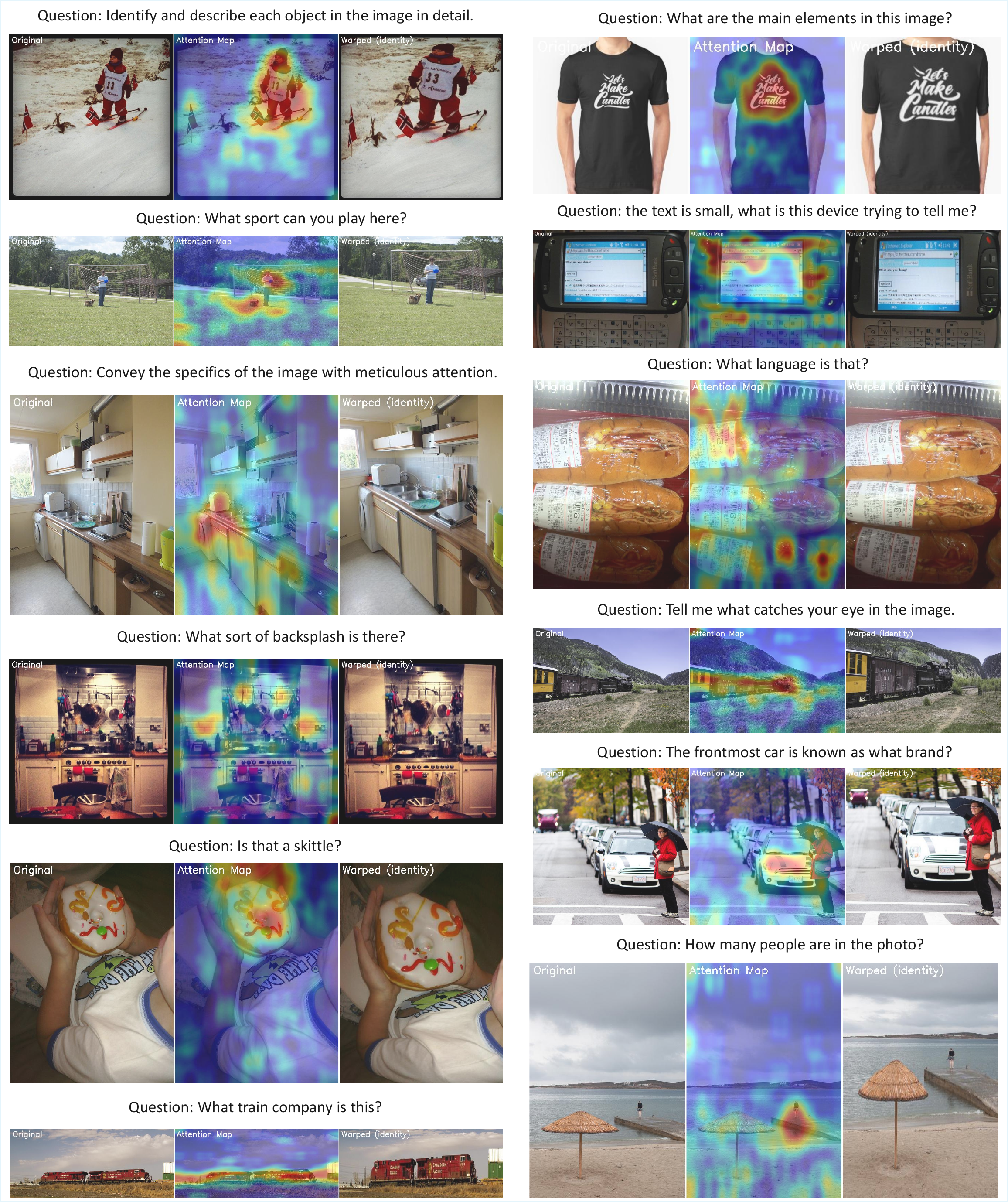}
    \caption{Qualitative results of AttWarp. We visualize the attention maps and the corresponding warped images across various VQA tasks. AttWarp effectively aligns the model's visual focus with task-relevant regions (\textit{e.g.}, the Skittle on the donut or the train logo).}
    \label{fig:placeholder1}\vspace{-0.75em}
\end{figure}

\section{Impact Statements}\vspace{-0.5em}
This work improves the visual groundedness of multimodal large language models and may benefit applications such as document understanding, visual question answering, and other settings where reducing hallucination is important for reliability. By aligning model preference with visual evidence strength, our method can help make model outputs more faithful to the input image.

However, stronger fine-grained visual grounding and OCR ability may also introduce dual-use risks, such as sensitive text extraction or surveillance-related misuse. Moreover, reduced hallucination does not imply error-free behavior, and deployment in high-stakes settings should therefore remain cautious. We encourage responsible use with appropriate safeguards, domain-specific evaluation, and clear restrictions on misuse.

\newpage



\newpage

\end{document}